\useunder{\uline}{\ul}{}
\newcommand{\ie}{\emph{i.e.,}\xspace}
\newcommand{\eg}{\emph{e.g.,}\xspace}
\newcommand{\eat}[1]{}
\newcommand{\modelname}{{PertEval}\xspace} 
\newcommand{\stitle}[1]{\textbf{#1}.}
\title{PertEval: Unveiling Real Knowledge Capacity of LLMs with Knowledge-Invariant Perturbations}
\author{%
  Jiatong Li$^{1\dag}$,
  Renjun Hu$^2$,
  Kunzhe Huang$^2$,
  Yan Zhuang$^1$, \\
  \textbf{Qi Liu$^{1*}$,
  Mengxiao Zhu$^1$,
  Xing Shi$^2$,
  Wei Lin$^2$}\\
  $^1$University of Science and Technology of China, China \\
  $^2$Alibaba Cloud Computing, China \\
  \{cslijt, zykb\}@mail.ustc.edu.cn, 
  \{qiliuql, mxzhu\}@ustc.edu.cn, \\
 \{renjun.hrj, huangkunzhe.hkz, shubao.sx, weilin.lw\}@alibaba-inc.com \\
}
\begin{document}

\maketitle

\renewcommand{\thefootnote}{}
\footnotetext{$^\dag$Work done during Li's internship at Alibaba Cloud Computing, under the guidance of Hu.}
\footnotetext{$^*$Qi Liu is the corresponding author.}
\renewcommand{\thefootnote}{\arabic{footnote}}

\begin{abstract}
  Expert-designed close-ended benchmarks are indispensable in assessing the knowledge capacity of large language models (LLMs). Despite their widespread use, concerns have mounted regarding their reliability due to limited test scenarios and an unavoidable risk of data contamination. 
  To rectify this, we present \modelname, a toolkit devised for in-depth probing of LLMs' knowledge capacity through \textbf{knowledge-invariant perturbations}. These perturbations employ human-like restatement techniques to generate on-the-fly test samples from static benchmarks, meticulously retaining knowledge-critical content while altering irrelevant details. Our toolkit further includes a suite of \textbf{response consistency analyses} that compare performance on raw vs. perturbed test sets to precisely assess LLMs' genuine knowledge capacity. 
  Six representative LLMs are re-evaluated using \modelname. Results reveal significantly inflated performance of the LLMs on raw benchmarks, including an absolute 25.8\% overestimation for GPT-4. Additionally, through a nuanced response pattern analysis, we discover that \modelname retains LLMs' uncertainty to specious knowledge, and reveals their potential rote memorization to correct options which leads to overestimated performance. We also find that the detailed response consistency analyses by \modelname could illuminate various weaknesses in existing LLMs' knowledge mastery and guide the development of refinement. 
  Our findings provide insights for advancing more robust and genuinely knowledgeable LLMs.
  Our code is available at \url{https://github.com/aigc-apps/PertEval}.
\end{abstract}

\section{Introduction}

\begin{figure}
    \centering
    \includegraphics[width=\linewidth]{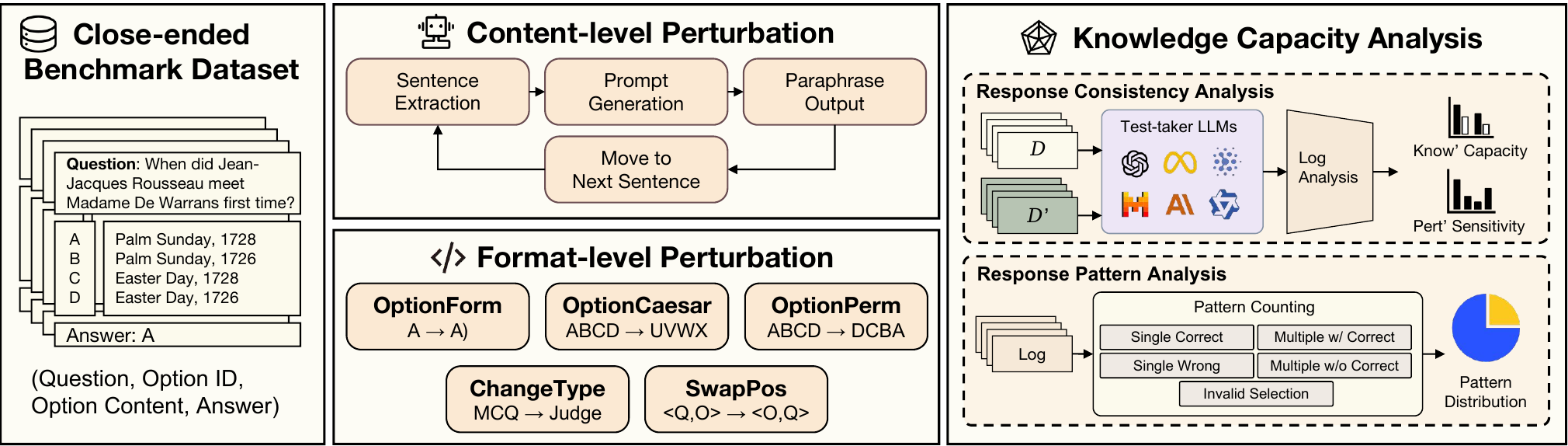}
    \caption{An overview of the \modelname evaluation toolkit. \modelname uses content-level and format-level perturbations to generate perturbed dataset $D'$ from \textit{existing} close-ended benchmark dataset $D$. Next, it evaluates the knowledge capacity of LLMs via response consistency analysis. \modelname also demonstrates in-depth the performance feature of LLMs via response pattern analysis.}
    \label{fig:perteval-plot}
\end{figure}

\par Large language models (LLMs) are developing rapidly, and have shown excellent basic capabilities, such as reasoning~\citep{wei2022chainofthought, webb2023reasoning}, planning~\citep{valmeekam2023planning} and world knowledge~\citep{yu2024kola}, in real-world tasks. 
Given the widespread deployment of LLMs across an increasing number of scenarios, including those that are safety-critical~\citep{bengio2024airisk}, evaluating the real capability of LLMs becomes a necessary and significant task. Knowledge capacity, \ie the ability to retrieve and utilize acquired knowledge to solve professional problems, is one of the core capabilities of LLMs \citep{yu2024kola}.
Existing knowledge capacity evaluation largely rely on standardized tests using close-ended benchmarks~\citep{hendrycks2021mmlu, zhong2023agieval, clark2018arc, pall2022medmcqa, nay2023taxlaw, cobbe2021gsm8k}. 
These benchmarks consist of multiple-choice questions that include question descriptions, goals, options, and correct answers. They encapsulate valuable domain-specific knowledge that LLMs need to comprehend. The knowledge capacity of LLMs can then be directly gauged by the performance on these test datasets.
\par To date, static benchmarks have been essential in assessing the relative knowledge capacity of LLMs~\citep{guo2023survey} because of their high quality and cost effectiveness. However, due to limited test scenarios \citep{li2024canmcqreally} and the unavoidable risk of data contamination \citep{oren2024proving}, these benchmarks still encounter significant challenges in accurately measuring the true knowledge capacity of LLMs.
First, static benchmarks rely exclusively on close-ended questions in fixed formats for evaluation, which differs a lot from real-world scenarios~\citep{lin2024how}.
For instance, instead of merely selecting an option, some users prefer asking LLMs to judge the correctness of options, while others may directly request the LLMs to generate an answer without providing an option list~\citep{wang2023openqa}. These varying prompting styles have an innegligible effect on performance~\citep{fourrier2023what}. 
Furthermore, previous work~\citep{schaeffer2023emergent} has demonstrated that nonlinear or discontinuous evaluation metrics could lead to inaccurate implications about LLM capacity. Therefore, to genuinely evaluate the knowledge capacity of LLMs, it is necessary \eat{and feasible} to test their performance across a variety of scenarios that mirror real-world conditions.
Second, in terms of data contamination~\citep{oren2024proving}, knowledge capacity benchmarks usually publish their test data online to ensure transparency and reproducibility (\eg~\citep{hendrycks2021mmlu,zhong2023agieval}).
However, this practice enables LLMs to memorize the test data during pre-training and alignment, which can lead to an overestimation of knowledge capacity, thus undermining the trustworthiness of evaluation results~\citep{brown2020language}.
Efforts to mitigate data contamination have primarily focused on detecting such contamination~\citep{oren2024proving, golchin2024time} and generating new test data~\citep{bai2024kgquiz}. Nonetheless, high-quality benchmark data contain rich knowledge that is valuable for evaluation. \eat{has not been fully leveraged.} We suppose that it is feasible to utilize this knowledge effectively to assess the true knowledge capacity of LLMs while reducing the risk of data contamination.
\par To achieve this goal, we introduce \modelname, an evaluation toolkit that utilizes knowledge-invariant perturbations on static benchmarks to unveil the real knowledge capacity of LLMs, as illustrated in Figure \ref{fig:perteval-plot}.
This idea stems from an analogy with human educational assessment. Like LLM evaluation, close-ended questions are prevalent in human assessments due to their versatility, cost-effectiveness and precision of measurement~\citep{liu2023mcqhighorder}. However, they also face challenges such as cheating~\citep{noorbehbahani2022cheating} and a lack of variety~\citep{liu2023mcqhighorder}. Solutions to these issues include item personalization~\citep{manoharan2019personalization} and the creation of distractive options~\citep{shin2019mcqdistract, gierl2017mcqdistract}. 
Inspired by this, \modelname incorporates human-like \textbf{knowledge-invariant perturbations} to restate static test data to various forms and employs \textbf{response consistency analyses} to evaluate and trace the change of LLMs' performance in different test scenarios. Specifically, knowledge-invariant perturbations consist of both the content-level perturbation that extensively writes questions to minimize data contamination and a series of format-level perturbations that comprehensively cover potential real-world test scenarios. Response consistency analyses encompass basic evaluation metrics to measure the real knowledge capacity of LLMs and response pattern analysis to investigate the causes behind changes in LLM performance. Consequently, \modelname could deeply probe the LLMs' weaknesses in knowledge mastery and offer insights for their refinement.
\par In experiments, we re-evaluate six representative LLMs using \modelname. Capacity metric results first reveal a significant overestimation of the knowledge capacity of LLMs by static benchmarks, with an absolute 25.8\% overestimation for GPT-4 and 38.8\% for Gemini-1.0-Pro on MMLU~\citep{hendrycks2021mmlu} (results of Gemini-1.0-Pro are available at Appendix \ref{app:incorrect-response-analysis}).
Response pattern analyses further unveil that the performance decline with \modelname is mainly caused by an increase of selecting extra incorrect choices, with an increase of up to 9.7\% and 27.7\% for GPT-4 and Gemini-1.0-Pro, respectively. These findings highlight the potential for rote memorization of correct options by LLMs on static benchmarks.
Detailed response consistency analyses in terms of overall performance stability and correct response consistency indicate various weaknesses in the knowledge capacity of existing LLMs, \eat{providing guidance for their refinement.} such as the vulnerability to content-level perturbation and the change of global text order.
\par Overall, our contributions are as follows: 
\begin{itemize}[leftmargin=*]
    \item We propose an evaluation toolkit that utilizes knowledge-invariant perturbations on close-ended evaluation benchmarks to unveil the real knowledge capacity of LLMs, a significant step towards more trustworthy LLM evaluation. 
    \item We re-evaluate the knowledge capacity of six representative LLMs using \modelname. Evaluation results not only demonstrate significantly inflated performance of LLMs by static benchmarks, but also reveal LLMs' uncertainty to specious knowledge and rote memorization to correct options. 
    \item We demonstrate the vulnerability of various LLMs to different perturbation strategies in \modelname and provide insights for the refinement in terms of promoting LLMs' knowledge capacity.
\end{itemize}

\section{Related Work}
\label{sec:related-work}

\par\textbf{Knowledge Capacity Evaluation of LLMs} Research works of the knowledge capacity evaluation LLMs consist of two lines, \ie \textit{evaluation benchmarks} and \textit{evaluation methodologies}. The first line aim to design comprehensive and accurate benchmarks to quantize the knowledge capacity of LLMs. Evaluation benchmarks could be further classified into \textit{general} or \textit{professional} knowledge benchmarks~\citep{guo2023survey}. The first category aims to evaluate the general knowledge capacity of LLMs across a wide range of domains, such as MMLU~\citep{hendrycks2021mmlu}, C-Eval~\citep{huang2023ceval} and ARC~\citep{clark2018arc}. The second category aims to deeply evaluate the professional capacity of LLMs in specific domains, such as MedMCQA~\citep{pall2022medmcqa} in medicine and ScienceQA~\citep{lu2022scienceqa} in science. These benchmarks depend on professional multiple-choice questions to quantitatively measure the capacity of LLMs. The other research line, \ie evaluation methodologies, aim to improve the accuracy and truthfulness of evaluation via data-driven or model-driven techniques. 
Along this line, recent studies have emphasized data-contamination detection techniques~\citep{oren2024proving, golchin2024time} and contamination-free test data generation~\citep{bai2024kgquiz, bai2023llmexaminer}. In addition, psychometric-based techniques~\citep{zhuang2023efficient},  dynamic evaluation techniques~\citep{li2023beyond,zheng2024aliagent}, and perturbation-based evaluation techniques such as CheckList~\citep{ribeiro2020checklist} and PolyJuice~\citep{wu2021polyjuice} have been proposed to comprehensively evaluate the capability of language models  (LMs) from different aspects. However, these methodologies are unsuitable for knowledge capacity evaluation, and are hard to utilize valuable information in expert-designed datasets for trustworthy evaluation.
\par\textbf{Consistency of LMs} The consistency of an LM denotes its behavior invariance under meaning-preserving alternations in its input~\citep{elazar2021measuring}, which is significant in natural language understanding. Since the emergence of pretrained language models (PLMs), massive efforts have been made to evaluate the consistency of LMs on various fields. \citet{elazar2021measuring} revealed the poor consistency performance of PLMs on factual knowledge. \citet{fierro2022factualconsistency} extended the study to multilingual PLMs and obtained similar findings. \citet{DBLP:conf/coling/JangKL22} proposed a taxonomy for various concepts of consistency and established a benchmark, BECEL, to evaluate the consistency of PLMs. As LLMs develop swiftly, \citet{wang2023selfconsistency} proposed to use self-consistency decoding to empower the Chain-of-Thought reasoning ability of LLMs. Recently, \citet{rajan2024knowledgebasedconsistency} proposed KonTest, an autonomous evaluation framework that utilizes knowledge graphs to generate test samples, to probe the inconsistency in LLMs' knowledge of the world. In summary, these research works view consistency as a research subject that needs to be measured or intervened. Differently, in our study, the consistency of LLMs is viewed as a technical method, \ie a measurement for probing LLMs' real knowledge capacity.
\par\textbf{Adversarial Text Attacks on LMs} These techniques aim to mislead language models to yield wrong or toxic outputs with small text perturbations, which play an indispensable role in the research of robustness and safety of language models. \eat{In the LLM era,} Such efforts include but not limited to jailbreaking~\citep{wei2023jailbreak} and text classification attack~\citep{xu2024promptattack}. Text attacks could also be classified into character-level~\citep{gao2018char, ebrahimi2018char}, word-level~\citep{li2023word} and sentence-level~\citep{lin2021sentence}. Recently, LLM-based perturbations like PromptAttack~\citep{xu2024promptattack} have also been introduced. However, considering their cost and the comprehensiveness, existing text attack methods are insufficient for the purpose of this work. Moreover, our \modelname is built upon a new concept of knowledge-invariant perturbation, which is an important supplement to this topic.

\section{Methodology}

\subsection{Knowledge-invariant Perturbations}

We first present the two types of knowledge-invariant perturbations to restate static test questions. The knowledge-invariant property of these perturbations will be discussed in the next subsection.

\stitle{Content-level perturbation: knowledge-invariant paraphrasing}
The goal of content-level perturbation is to substantially alter the phrasing of questions while retaining the original knowledge, thereby mitigating data contamination in the test data. The key challenge of such knowledge-invariant paraphrasing is to preserve the original knowledge while changing the statement as much as possible. 
A test question is a composite of sentences, with each provide either backgrounds, conditions or goals of the question. Therefore, to preserve knowledge of the original question, 
we propose a sentence-by-sentence paraphrasing algorithm using an LLM rewriter. 
Formally, let $q = (s_1, s_2, \ldots, s_T)$ represent the question text, where $s_t$ denotes the $t$-th sentence (for $t = 1,2,\ldots, T$). The semantic of $s_t$ (for $t\geq 2$) depends on its \textit{prerequisite sequence} $(s_1,\ldots,s_{t-1})$. To rewrite the entire question, an LLM rewriter is instructed to paraphrase sentence by sentence given each sentence and its original prerequisite sequences. Detailed descriptions of the paraphrasing algorithm, the prompt template for the rewriter LLM, and an example of the perturbation can be found in Appendix \ref{app:details-of-methodology}. 

\stitle{Format-level perturbation: question format refactoring}
The objective of question format refactoring is to assess the robustness of LLMs' knowledge capacity under complicated test conditions.
To achieve this, we have developed a variety of format-level perturbation strategies to comprehensively evaluate the resilience of LLMs' knowledge capacity in various test scenarios. 

\begin{itemize}[leftmargin=*]
\item \textit{Option permutation (OptionPerm).}
OptionPerm reorders the contents of options while maintaining the original order of option IDs. Its goal is to evaluate \textit{option ordering bias} in the knowledge acquisition of LLMs. By default, OptionPerm reverses the order of option contents to completely disrupt their local sequence.

\item \textit{Option format refactoring (OptionForm).}
OptionForm modifies the format of option IDs, such as by appending a right parenthesis to the end. This perturbation aims to assess the \textit{option format bias} in the knowledge acquisition of LLMs. OptionForm may influence LLM performance by altering the dependency between different tokens within the options.

\item \textit{Option ID shifting (OptionCaesar).}
OptionCaesar shifts the ASCII value of option IDs to change their character, which is similar to Caesar encryption. This technique aims to investigate the \textit{selection bias} in the knowledge acquirement of LLMs, a phenomenon empirically observed in some models.  By replacing common option IDs with less common ones (\eg A/B/C/D $\to$ U/V/W/X), OptionCaesar allows us to observe whether the values of the IDs impact the LLMs' performance.

\item \textit{Question type changing (ChangeType).}
ChangeType converts a multi-choice question into a multi-judgment question. This perturbation aims to examine the \textit{question type bias} in LLMs. Since the feasible solution space of a multi-choice question and the corresponding multi-judgement question is identical (given $N$ options, the size of the feasible solution space is $2^N - 1$), an LLM that robustly acquire knowledge/skills in a question should be insensitive to ChangeType.

\item \textit{Question position swapping (SwapPos).}
SwapPos switches the position of the question text and the options. It aims to evaluate the \textit{global ordering bias} of LLMs. For rational human test-takers, SwapPos does not affect performance, as it does not alter the question content. However, SwapPos can significantly change the output distribution of self-regressive text generation models by disrupting the global ordering of input prompts.
\end{itemize}

\par Examples of format-level perturbations are available at Appendix \ref{app:question-format-refactoring}.

\subsection{Knowledge Invariance Verification} \label{subsec:kic} 
\label{sec:knowledge-invariance-checking}

\begin{table}[htp]
    \centering
    \caption{Standards of knowledge-invariance scoring.}
    \label{tab:standard-knowledge-invariance}
    \begin{tabularx}{\textwidth}{X p{0.72\textwidth}} 
    \toprule
        \textbf{Standard Name} & \textbf{Standard Description}\\
        \midrule
        Semantic Information Invariance & The perturbed question must have the same semantic information as the original question, which cannot change the name of entities, logic of statements and meaning of equations. \\ \midrule
        Reasoning Invariance & A human test-taker's reasoning process to obtain his/her response in the perturbed question should be consistent with that in the original question. \\ \midrule
        Answer Invariance & The answer of a perturbed question should be semantically equivalent to the answer of the original question. \\ \midrule
        Statement Clarity & The perturbed question should clearly present contexts, conditions and the target of the question. \\
    \bottomrule
    \end{tabularx}
\end{table}

We next investigate the knowledge-invariance property of the proposed perturbations.

\par \textbf{Knowledge invariance scoring}. This approach checks the knowledge invariance of perturbations from the perspective of humans' and LLMs' perception. Specifically, based on previous works in the measurement of semantic similarity~\citep{chandrasekaran2022evolution} and characteristics of close-ended benchmarks, we first propose standards of knowledge-invariance scoring, as shown in Table \ref{tab:standard-knowledge-invariance}.
Next, we recruit professional human volunteers and utilize superior LLMs, such as claude-3-sonnet, respectively, to serve as the referee to rate knowledge invariance scores.
Given a set of  original and perturbed question pairs $D_{dual} = \{(q_i, q_i') \mid i = 1,2,\ldots,|D|\}$, we construct scoring prompts based on a predefined template, knowledge invariant standards, and the scoring criteria for knowledge invariance judgement (see Table \ref{tab:meaning-of-scores} in Appendix \ref{app:details-of-methodology}).
After collecting the output scores from the referee LLM using these scoring prompts, we calculate the average score as the final result.

\par \textbf{Testing on mastered questions for LLMs}. This approach evaluates  knowledge invariance based on the output performance of LLMs. 
The rationale here is that if a perturbation is knowledge-invariant, an LLM's performance on questions that it has truly mastered should remain consistent between the original and perturbed versions. Identifying \textit{mastered questions} for LLMs, however, poses a challenge. To this end, we propose using questions that most LLMs can correctly answer as a proxy for mastered questions. An LLM, such as gpt-4-turbo, is then required to answer both the original and perturbed versions of these questions to validate the knowledge invariance of the perturbation. In summary, the test-based knowledge invariance checking procedure involves the following steps:

\begin{enumerate}[leftmargin = *]
    \item Given a set of LLM test-takers and the evaluation dataset $D$, construct the \textit{mastered question set} $D_{simple} \subset D$ consisting of questions that all LLMs could answer correctly.
    \item Apply a perturbation to each sample in $D_{simple}$ to create the perturbed dataset $D_{simple}'$.
    \item Assess the performance of a knowledgeable LLM on both $D_{simple}$ and $D_{simple}'$. If the test results show no significant difference in performance, the perturbation can be considered knowledge-invariant.
\end{enumerate}
\par This method emphasizes the impact of perturbations on the question-answer process of LLMs. In comparison to the scoring test, this approach is more appropriate for evaluating the knowledge invariance of content-level perturbations. We present detailed results of both tests in Experiments.

\subsection{Response Consistency Analyses for Measuring Real Knowledge Capacity} 
\par We further devise a suite of response consistency analyses that, by comparing performance on raw vs. perturbed test sets, unveils the real knowledge capacity of LLMs. These include a metric for quantifying calibrated knowledge capacity and the more fine-grained ones helpful for revealing the vulnerability of LLMs to different perturbation strategies.
\eat{
\par Response consistency analysis aim to unveil the real knowledge capacity of LLMs via analysis on the consistency of response of their performance from original data to perturbed data. The real knowledge capacity of LLMs can be assessed with the metric, Consistent Accuracy (ACC@Consist). The cause of the transition of response pattern could be unveiled through the visualization of response patterns on both the original and perturbed dataset. Fine-grained analyses in terms of overall performance stability and correct response consistency unveil the vulnerability of LLMs to different perturbation strategies.
\par Transition analysis aim to unveil the real knowledge capacity of LLMs via analysis on the transition of their response pattern from original data to perturbed data. In transition analysis, the real knowledge capacity of LLMs can be assessed with the metric, Consistent Accuracy (ACC@Consist). The cause of the transition of response pattern could be unveiled through the visualization of response patterns on both the original and perturbed dataset. Finally, fine-grained analyses in terms of overall performance stability and correct response consistency further unveil the vulnerability of LLMs to different perturbation strategies.
}
\par\textbf{Metric of real knowledge capacity.} To measure real knowledge capacity of LLMs, we propose the \textit{Consistent Accuracy (ACC@Consist)} as the evaluation metric. The rationale is that if an LLM has truly mastered a question and its underlying knowledge, its performance should remain consistent across all versions of the question, including both the original and the most complex perturbed versions.
Formally, let $M(\cdot)$ denotes the response function of an LLM. Let $x = (q_x, y_x)\in X$ represent a test question, where $q_x$ and  $y_x$ denote the question text and the correct answer(s) of $x$. Further let $\sigma^*:X\to X$ be the most complicated composite knowledge-invariant perturbation and $D = \{x_1,x_2,\ldots,x_{|D|}\}$ the raw test set. Then ACC@Consist is defined as:
\begin{equation}
    \text{ACC@Consist}(M,D) = \frac{1}{|D|}\sum_{x\in D}I[M(q_x) = y_x \land M\left(q_{\sigma^*(x)}\right) = y_{\sigma^*(x)}].
\end{equation}
Here $I(\cdot)$ is the binary indicator function. Taking a step further, we implement a response pattern analysis that uncovers the causes of the discrepancy between ACC@Consist and the original ACC.
\par\textbf{Overall performance stability}. In terms of performance stability, we expect that LLMs which have robustly acquired the knowledge and skills required by benchmarks should exhibit stable performance when faced with knowledge-invariant perturbations. To evaluate this, we propose \textit{Performance Drop Rate (PDR)} as a metric to measure the overall performance stability of LLMs under knowledge-invariant perturbations.:
\begin{equation}
    \text{PDR}(M,D,\sigma) = \frac{1}{|D|}\sum_{x\in D} \Big( I[M(q_{\sigma(x)})=y_{\sigma(x)}] - I[M(q_x)=y_x] \Big).
\end{equation}
\par \eat{Here $M$ denotes the LLM, $D$ denotes the original test dataset.} Here $D$ denotes the original test dataset and $\sigma$ is a perturbation strategy. Essentially, $PDR$ measures the discrepancy between the LLM's accuracy on $D$ and the perturbed dataset. When $PDR < 0$, the perturbation decreases the overall performance of an LLM, indicating that it does not robustly acquire knowledge and skills. To obtain more reliable conclusions, we further conduct \textit{Wilcoxon signed-rank test} for original and perturbed question sample pairs.

\par\textbf{Correct response consistency}. 
\eat{
\begin{table}[htp]
    \centering
    \caption{Response transition matrix.}
    \label{tab:performance-transition-matrix}
    \begin{tabular}{|c|c|c|}
    \hline
       \textbf{Number} & \textbf{Correct after perturbation} $\sigma$ & \textbf{Wrong after perturbation $\sigma$} \\ \hline
        \textbf{Correct before perturbation $\sigma$} & $CC$ & $IC$ \\ \hline
        \textbf{Wrong before perturbation $\sigma$} & $IW$ & $CW$ \\ \hline
    \end{tabular}
\end{table}
}
We also adopt the \textit{Recall of Performance (ROP)} metric to measure correct response consistency. The term ``ROP'' draws an analogy to the classical recall score. 
Let $CC$ denote the number of responses consistently correct before and after perturbation, and $IC$ the number of responses initially correct and later incorrect after perturbation (see Appendix \ref{app:correct-response-consistency} for detailed demonstration). 
Similar to the recall score, ROP is defined as the ratio of consistent correct responses to the total number of correct responses before perturbation, \ie

\begin{equation}
    \text{ROP}(M,D,\sigma) = CC / (CC + IC) 
\end{equation}

\par The value of ROP is within $[0,1]$, with higher ROP indicating better correct response consistency under perturbation $\sigma$. It should be noticed that, unlike PDR, there does not exist an absolute threshold for ROP to evaluate goodness. Rather, ROP serves as an index of correct response consistency, which can be utilized to compare the performance of different LLMs under perturbations.

\begin{table}[t]
\centering
\caption{\textbf{Knowledge invariance scores$\uparrow$ rated by human scorers}. Four independent scores from different human scorer groups are presented in ascending order for each cell.}
\label{tab:kis-human}
\resizebox{0.85\linewidth}{!}{
\begin{tabular}{lllll}
\toprule
\textbf{Method}          & \textbf{C-Math} & \textbf{W-History} & \textbf{P-Psychology} & \textbf{P-Medicine} \\
\midrule
PromptAttack             & 2.3/2.4/3.0/3.9 & 2.2/2.2/2.3/2.8    & 1.3/2.4/2.8/2.8       & 1.6/2.5/3.5/3.6     \\
\textbf{PertEval (ours)} & \textbf{3.6/3.8/3.9/3.9} & \textbf{3.7/4.1/4.1/4.3}    & \textbf{4.3/4.4/4.5/4.7}       &\textbf{4.2/4.3/4.4/4.6}\\
\bottomrule
\end{tabular}
}
\end{table}

\begin{table}[t]
\centering
\caption{\textbf{Knowledge invariance scores$\uparrow$ rated by superior LLMs}. Values (a/b/c) in each cell denotes the average knowledge invariance score rated by gpt-4-turbo, claude-3.5-sonnet, and llama-3.1-405b, respectively.}
\label{tab:kis-llm}
\resizebox{0.85\linewidth}{!}{
\begin{tabular}{lllll}
\toprule
\textbf{Method}          & \textbf{C-Math}      & \textbf{W-History}   & \textbf{P-Psychology} & \textbf{P-Medicine}  \\
\midrule
PromptAttack             & 3.2/3.6/3.6          & 3.2/3.3/3.7          & 3.9/3.9/3.7           & 4.1/4.3/4.2          \\
\textbf{PertEval (ours)} & \textbf{3.8/3.9/4.0} & \textbf{4.0/4.2/4.0} & \textbf{4.0/4.4/4.0}  & \textbf{4.1/4.4/4.0}\\
\bottomrule
\end{tabular}
}
\end{table}

\begin{figure}[t]
    \centering
    \includegraphics[width=\linewidth]{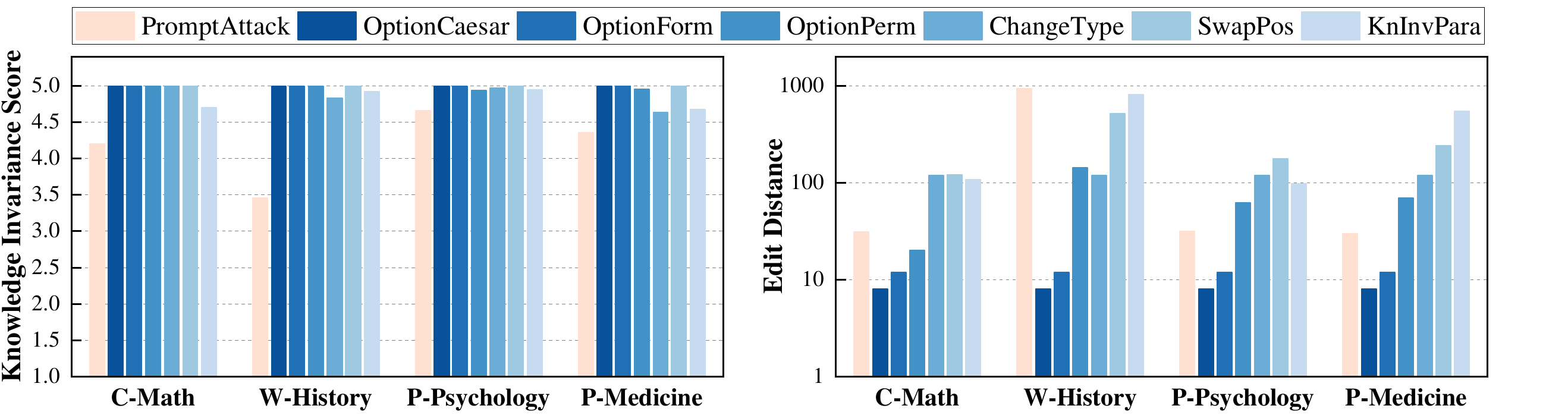}
    \caption{(left) \textbf{Perturbation-wise knowledge invariance scores$\uparrow$ by gpt-4-turbo} (systematic sampling, interval = 10); (right) \textbf{edit distances} between original question and perturbed questions.}
    \label{fig:llm-based-knowledge-invariance-scoring}
\end{figure}

\begin{table}[t]
\centering
\caption{\textbf{Results of testing on mastered questions} - \textbf{Performance Drop Rate (PDR)} of \textbf{overall performance stability testing on mastered questions} using gpt-4-turbo.}
\label{tab:opst-mastered-questions}
\begin{tabular}{l|rrrr|r}
\toprule
\textbf{Strategy} & \textbf{C-Math} & \textbf{W-History} & \textbf{P-Psychology} & \textbf{P-Medicine} & \textbf{AVG$_{macro}$} \\
 \midrule
KnInvPara & 0.0000 & -0.0115 & 0.0091 & -0.0244 & -0.0067 \\
OptionPerm & 0.0000 & 0.0000 & 0.0000 & -0.0244 & -0.0061 \\
OptionForm & 0.0000 & 0.0000 & 0.0000 & 0.0000 & 0.0000 \\
OptionCaesar & 0.0000 & 0.0000 & 0.0091 & 0.0000 & 0.0023 \\
ChangeType & 0.0000 & 0.0000 & 0.0000 & 0.0000 & 0.0000 \\
SwapPos & 0.0000 & -0.1149 & -0.0636 & -0.0488 & -0.0568 \\
\bottomrule
\end{tabular}
\end{table}

\section{Experiments}
\par \textbf{Datasets.} 
We choose the test data in \textit{College Mathematics (C-Math), World History (W-History), Professional Psychology (P-Psychology) and Professional Medicine (P-Medicine)} from the Massive Multitask Language Understanding (MMLU)~\citep{hendrycks2021mmlu} for evaluation. This is a trade-off between comprehensively covering evaluation domains/subjects and keeping the evaluation cost affordable. Note that each dataset represents a supercategory in STEM, Humanities, Social Sciences, and Other fields, respectively. The statistics of the selected datasets are detailed in Table \ref{tab:dataset-statistics} in Appendix \ref{app:details-of-dataset}.
\par \textbf{Large Language Models.} Based on leaderboards like OpenCompass~\citep{2023opencompass} and considering both the popularity and timeliness of LLMs, we select six representative LLMs for evaluation. These LLMs include close-sourced (gpt-4-turbo \citep{openai2023gpt4}, gpt-3.5-turbo~\citep{openai2021gpt35}, gemini-1.0-pro~\citep{anil2023gemini} and glm-3-turbo~\citep{du2022glm}) and open-sourced (mistral-7b-instruct-v0.2~\citep{mistral2023mistral} and llama-3-8b-instruct~\citep{llama3modelcard}) models.

\subsection{Knowledge Invariance Verification for Perturbations}
\par We first investigate the knowledge invariance property of our perturbations using the two methods developed in Section~\ref{subsec:kic}. 
Employing the LLM-based knowledge invariance scoring method, we compare the scores of the proposed perturbations with those by PromptAttack~\citep{xu2024promptattack} as a baseline. 
\par\textbf{Experiment Setup}. For knowledge invariance scoring, 10 samples at equal intervals for each subject are used for scoring. Since there are four subjects and two perturbation strategies (1 baseline + 1 \modelname) for each question, we have 10*4*2 = 80 question pairs in total. To ensure that each question pair have four independent scores, eight human scorers are engaged in human-based scoring. Human scorers are equally divided into four groups. Each group independently scores for all 80 question pairs. Therefore, each human scorer within a group scores for 40 shuffled samples. For LLM-based scoring, we choose gpt-4-turbo, claude-3.5-sonnet, and llama-3.1-405b as scorers. For fine-grained perturbation-wise scoring, we choose gpt-4-turbo as the scorer.
\par\textbf{Result Analysis}. Overall knowledge invariance scores are presented in Table \ref{tab:kis-human} and \ref{tab:kis-llm}. \modelname outperforms the baseline, and the score mostly exceeds 4.0, the borderline of knowledge-invariance. Scores of both PromptAttack and \modelname on C-Math do not exceed 4.0. This is because many samples in College Mathematics of MMLU are short mathematical reasoning questions that have many mathematical symbols and statements. Indeed, in many STEM subjects, requirements for knowledge capacity and reasoning ability often mix together in a test question. This points out considerable potential for future development in robust LLM evaluation in STEM subjects. 
\par Next, the results presented in Figure~\ref{fig:llm-based-knowledge-invariance-scoring} show that the average scores of our perturbations approach 5.0 (perfectly knowledge-invariant) in most cases, consistently outperforming those from PromptAttack.
We also present the Levenshtein distance (\ie edit distance) between the perturbed and original questions under each perturbation. It turns out that the traditional edit distance is not suitable for measuring knowledge invariance due to its lack of correlation with knowledge invariance.
\par Within mastered question testing (results depicted in Table~\ref{tab:opst-mastered-questions}), we observe that most perturbations achieve an average PDR closed to zero, indicating consistent performance on the perturbed and the original data. The PDR of SwapPos on World History and Professional Psychology is less than -0.05 despite that it does not alter any knowledge-relevant information by design. A possible explanation is that SwapPos changes the global ordering of question prompts, thus affecting the token generation of these self-regressive LLMs. More detailed analyses of results could be found in Appendix~\ref{app:check-knowledge-invariance}.

\begin{figure}[t]
    \centering
    \includegraphics[width = \linewidth]{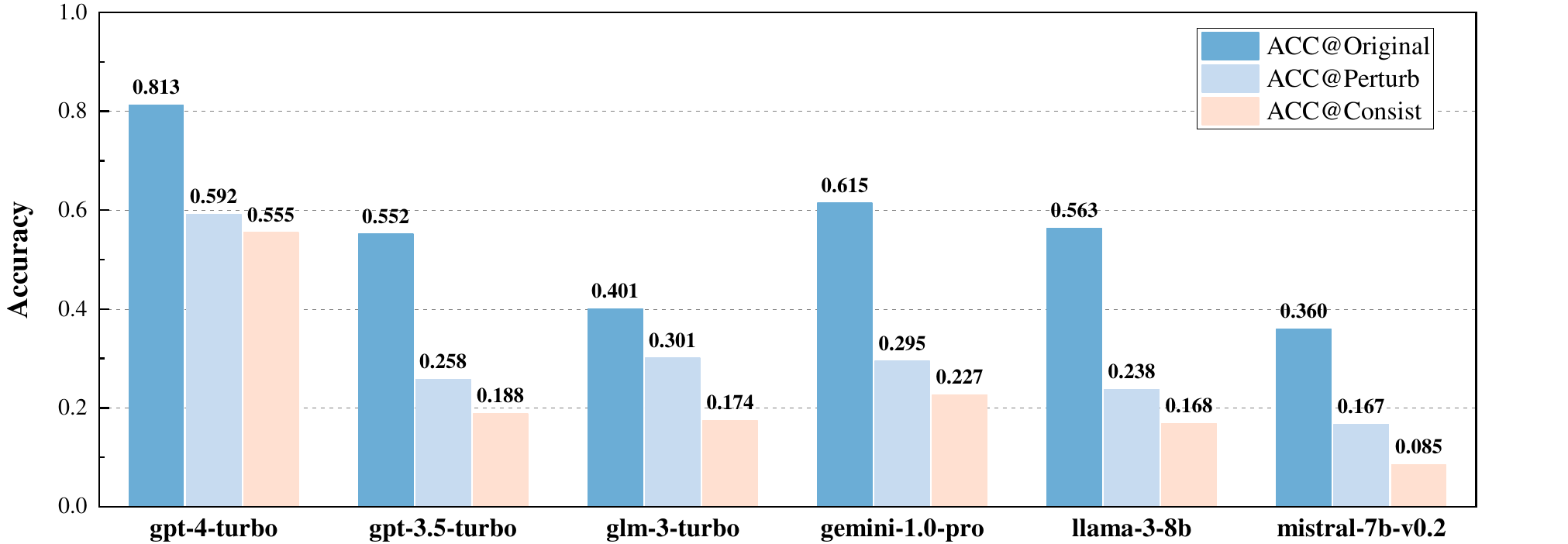}
    \caption{\textbf{Real knowledge capacities measured by ACC@Consist$\uparrow$ with composite knowledge-invariant perturbation}.  ACC@Original and ACC@Perturb denote accuracy on the original and perturbed data, respectively. We report the macro-averaged results on all tested datasets.} 
    \label{fig:consistent-knowledge-capacity}
\end{figure}

\subsection{Real Knowledge Capacity Evaluation}

\par Each knowledge-invariant perturbation in \modelname targets a specific question restatement strategy. To construct the most challenging test scenario\eat{for LLMs} and obtain reliable and comprehensive evaluation results, we compose all perturbations to create a \textit{composite perturbation}. 
The real knowledge capacity of LLMs is then quantified by ACC@Consist before and after the composite perturbation. The results, shown in Figure~\ref{fig:consistent-knowledge-capacity}, yield several insights regarding the evaluation of LLMs’ knowledge capacity.

\par\textbf{1. Overvaluation on static benchmarks}. The knowledge capacity of LLMs is significantly overvalued on static benchmarks. By comparing ACC@Original with ACC@Perturb and ACC@Consist for each LLM, we observe a notable performance drop when using the perturbed dataset. For instance, the performance of gpt-4-turbo, gpt-3.5-turbo, and gemini-1.0-pro decreases by more than 25\%. Even the most capable LLM in our selection, gpt-4-turbo, achieves an ACC@Consist of only 0.555. These findings reveal a substantial gap between the evaluated knowledge capacity of LLMs on static datasets and their real knowledge capacity in complex scenarios.

\par\textbf{2. Robustly mastered knowledge}. Each LLM does have robustly mastered certain knowledge. This conclusion is drawn by comparing the LLMs’ performance to pure guessing. With $k$ options and a single correst answer, the expected ACC@Consist for random guessing is $1/k^2$, which equals to 0.0625 for $k=4$ in our experiments ({see Appendix \ref{app:prob-consist-kc} for detailed demonstration}). We observe from Figure \ref{fig:consistent-knowledge-capacity} that the ACC@Consist values of all tested LLMs are significantly higher than 0.0625, indicating that each LLM has indeed mastered some knowledge consistently. However, most LLMs have mastered less than half of the total knowledge, as indicated by ACC@Consist values below 0.5.

\begin{figure}[t]
    \centering
    \includegraphics[width=0.8\linewidth]{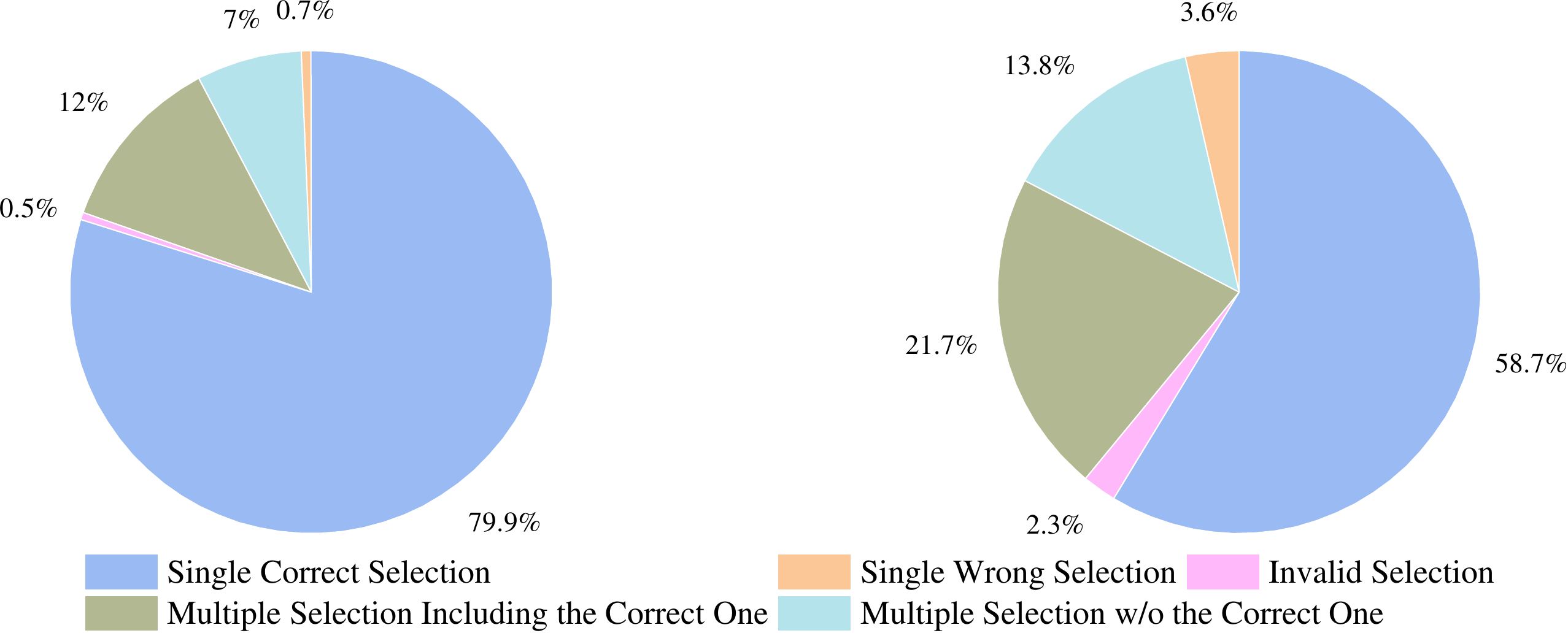}
    \caption{Response patterns of \textbf{gpt-4-turbo}. Left: original data; Right: perturbed data.}
    \label{fig:incorrect-response-analysis-gpt-4}
\end{figure}

\subsection{Response Pattern Analysis}
\par We further conduct a response pattern analysis to examine how perturbations affect the performance of LLMs. We find that knowledge-invariant perturbations affect LLMs by increasing the ratio of selecting extra incorrect options. As shown in Figure~\ref{fig:incorrect-response-analysis-gpt-4}, the main reason for the significant performance drop in gpt-4-turbo is their increased frequency of selecting additional incorrect options on the perturbed data (\ie $12\% \to 21.7\%$). These LLMs tend to select extra incorrect options alongside the correct ones. One possible explanation for this phenomenon is that \textbf{the LLMs indeed lack the ability to distinguish uncertain knowledge and filter out distracting options, while they correctly answer these questions in the original dataset via rote memorization. In other words, these LLMs do not truly master the knowledge behind the questions}. Therefore, it is crucial to not only enhance LLMs’ ability to identify correct answers but also bolster their capability to recognize and eliminate incorrect answers when confronted with distracting choices in real-world scenarios. More detailed response pattern analysis results can be found in Appendix~\ref{app:incorrect-response-analysis}.

\subsection{Overall Performance Stability}
\par Table~\ref{tab:pdr-llm-strategy} presents the overall performance stability results, in which each number is the \textit{macro}-PDR, \ie the average of PDRs separately calculated on each dataset. We also apply the Wicoxon signed-rank test to the \textit{micro}-PDR, which is the PDR calculated across all datasets combined. Specifically, let $s = I(M(q_x)=y_x)\in\{0,1\}$ and $s' = I(M(q_{\sigma(x)})=y_{\sigma(x)})\in\{0,1\}$ denote model $M$'s score on the original and perturbed questions, respectively. The alternative hypothesis of the Wilcoxon signed-rank test is $H_a: s > s'$, or equivalently,  $H_a: s' - s < 0$. 
We can analyze the results of Table~\ref{tab:pdr-llm-strategy} from both a column-wise and row-wise perspective.
From the column-wise perspective, we first observe that all selected LLMs have negative macro-PDRs given the content-level perturbation, \ie knowledge-invariant paraphrasing (KnInvPara). Additionally, the micro-PDR of close-sourced LLMs is \textit{significantly} negative, meaning that these LLMs lack robustness in content-level knowledge acquisition for these datasets. Another prominent observation is that all LLMs are highly sensitive to the SwapPos perturbation. We hypothesize that this sensitivity is due to the global format change introduced by SwapPos, which disrupts the text generation process of self-regressive LLMs, even though the perturbation is entirely knowledge-invariant.
From the row-wise perspective, we find that our perturbations cause performance drop in most cases, \eg at least 0.04 and up to 0.1 on average, indicating the universal vulnerability of current generation LLMs. 

\begin{table}[t]
\centering
\caption{\textbf{Macro PDR $\uparrow$} and \textbf{hypothesis test results of Micro PDR} of LLMs w.r.t. perturbation.}
\label{tab:pdr-llm-strategy}
\setlength{\tabcolsep}{2pt}
\begin{threeparttable}
\resizebox{\linewidth}{!}{
\begin{tabular}{l|llllll|r}
\toprule
\textbf{Model\textbackslash{}Strategy} & \textbf{KnInvPara} & \textbf{OptionPerm} & \textbf{OptionForm} & \textbf{OptionCaesar} & \textbf{ChangeType} & \textbf{SwapPos} & \textbf{AVG} \\
\midrule
\textbf{gpt-4-turbo} & {-0.0660$^{**}$} & {-0.0208$^{**}$} & {-0.0136} & {-0.0294$^{**}$} & {-0.0210} & {-0.1117$^{**}$} & -0.0438 \\
\textbf{gpt-3.5-turbo} & {-0.0275$^{**}$} & -0.0042 & {-0.1767$^{**}$} & {-0.0396$^{**}$} & {-0.1736$^{**}$} & {-0.1943$^{**}$} & -0.1027 \\
\textbf{gemini-1.0-pro} & {-0.0558$^{**}$} & +0.0121 & +0.0125 & +0.0030 & {-0.1310$^{**}$} & {-0.1532$^{**}$} & -0.0521 \\
\textbf{glm-3-turbo} & {-0.0370$^{**}$} & {-0.0190} & {-0.1397$^{**}$} & {-0.0118} & +0.0522 & {-0.2142$^{**}$} & -0.0616 \\
\textbf{mistral-7b-v0.2} & {-0.0264} & {-0.0200} & {-0.2789$^{**}$} & +0.0793 & {-0.0844$^{**}$} & {-0.1275$^{**}$} & -0.0763 \\
\textbf{llama-3-8b} & -0.0336$^{**}$ & -0.0091 & -0.0939$^{**}$ & -0.0368$^{**}$ & -0.2920$^{**}$ & -0.1814$^{**}$ & -0.1078 \\
\midrule
\textbf{AVG} & -0.0411 & -0.0102 & -0.1151 & -0.0059 & -0.1083 & -0.1637 & \\
\bottomrule
\end{tabular}
}
    \begin{tablenotes}
        \footnotesize
        \item $^{**}$: The \textit{Micro} PDR is significantly negative in the Wilcoxon signed-rank test ($\alpha = 0.01$).
    \end{tablenotes}
  \end{threeparttable}
\vspace{-10pt}
\end{table}

\begin{table}[t]
\centering
\caption{\textbf{Macro Recall of Performance (ROP) $\uparrow$} of LLMs w.r.t. perturbation strategies.}
\label{tab:rop-llm-strategy}
\setlength{\tabcolsep}{2pt}
\resizebox{\linewidth}{!}{
\begin{tabular}{l|llllll|r}
\toprule
\textbf{Model\textbackslash{}Strategy} & \textbf{KnInvPara} & \textbf{OptionPerm} & \textbf{OptionForm} & \textbf{OptionCaesar} & \textbf{ChangeType} & \textbf{SwapPos} & \textbf{AVG} \\
\midrule
\textbf{gpt-4-turbo} & 0.8798 & 0.9063 & 0.9601 & 0.9349 & 0.9221 & 0.7995 & 0.9005 \\
\textbf{gpt-3.5-turbo} & 0.8230 & 0.7728 & 0.5602 & 0.7801 & 0.5194 & 0.5610 & 0.6694 \\
\textbf{gemini-1.0-pro} & 0.7968 & 0.7995 & 0.9553 & 0.9226 & 0.6788 & 0.6362 & 0.7982 \\
\textbf{glm-3-turbo} & 0.7164 & 0.6756 & 0.5943 & 0.7884 & 0.6816 & 0.4026 & 0.6432 \\
\textbf{mistral-7b-v0.2} & 0.6543 & 0.6202 & 0.1339 & 0.7679 & 0.5128 & 0.4060 & 0.5159 \\
\textbf{llama-3-8b} & 0.8055 & 0.7433 & 0.7541 & 0.8102 & 0.3971 & 0.5209 & 0.6719 \\
\midrule
\textbf{AVG} & 0.7793 & 0.7530 & 0.6597 & 0.8340 & 0.6186 & 0.5544 &  \\
\bottomrule
\end{tabular}
}
\end{table}

\subsection{Correct Response Consistency}
\par The results of correct response consistency using ROP are presented in Table~\ref{tab:rop-llm-strategy}. There is significant variation in ROP across different LLMs. Among them, gpt-4-turbo performs the best, with an average macro-ROP of 0.9, which is a substantial advantage over other LLMs. Gemini-1.0-pro also performs well, particularly with the OptionForm and OptionCaesar strategies (both having a Macro-ROP > 0.9). 
Other LLMs, however, are vulnerable to almost all knowledge-invariant perturbations, which means that their performance on the original datasets are less reliable. On the other hand, the effect of different strategies on correct response consistency varies considerably. Similar to the results for performance stability, SwapPos is the most influential strategy affecting LLMs' correct response consistency, which exposes the inherent flaw of LLMs in defending global order perturbation.

\section{Discussion}
\label{sec:discussion}
\par\textbf{Conclusion.} The trustworthiness of capacity evaluation is, and will always be one of the most significant issues in the development of LLMs. 
In pursuit of this goal, we proposed an evaluation toolkit, \modelname, for unveiling LLMs' real knowledge capacity on close-ended benchmarks. 
We discovered not only a substantial overestimation of the knowledge capacity of LLMs by static benchmarks, but also LLMs' uncertainty to distinguish specious knowledge without rote memorization.
Detailed response consistency analyses revealed the vulnerability of LLMs to various knowledge-invariant perturbations, especially the content-level KnInvPara and the format-level SwapPos, which could inspire further refinement for LLMs' knowledge mastery. Indeed, \textit{the ultimate goal of this study is not only trustworthy evaluation, but to propel the development of LLMs' knowledge capacity.} Our primary experiments have verified the feasibility of using \modelname-generated data to improve LLM robustness to various test conditions via supervised fine-tuning (see Appendix~\ref{app:enhance-knowledge-capacity}). 
We believe that \modelname marks a significant step toward more trustworthy LLM evaluation.

\par\textbf{Limitations \& Future Work.} The first limitation of this work is its scope regarding suitable benchmarks. \modelname is suitable for close-ended benchmarks and cannot be directly applied to open-ended ones. The challenges stem from the variety of open-ended questions and the difficulty of their objective evaluation. Future research could explore how the concept of knowledge-invariant perturbation can be adapted to improve the trustworthiness of open-ended evaluations.
The second limitation of \modelname is the lack of controllability, a common issue in LLM-based text generation. Due to the complexity of LLMs, it is challenging to precisely control specific characteristics in paraphrased texts while keeping knowledge-invariance. In the future, we aim to achieve more fine-grained controllability in \modelname, such as quantitatively adjusting paraphrasing tone and length, to obtain more continuous and nuanced evaluation results of LLMs.
Besides, all conclusions of this study are based on experiments conducted on a subset of MMLU due to budget constraints. We are seeking economic methods to expand our experiments and further substantiate our findings.
Finally, the risk of adaptive optimization against \modelname exists as it is an open-sourced toolkit. However, the diversity of perturbations and the cost of adaptive optimization could effectively reduce the influence of adaptive optimization. By constructing a \modelname open evaluation platform, we can regularly develop and change perturbation strategies to prevent this procedure.

\newpage
\section*{Acknowledgements}
This research was supported by grants from the Joint Research Project of the Science and Technology Innovation Community in Yangtze River Delta (No. 2023CSJZN0200), the National Natural Science Foundation of China (No. 62337001), the Fundamental Research Funds for the Central Universities, and the Alibaba Research Intern Program. We thank all the volunteers for their massive efforts in supporting our experiments, including Yi Cheng, Xiaowen Zhang, and Anyu Chen at Alibaba Cloud Computing, Dingchu Zhang at Nanjing University, Yuyang Xu at Zhejiang University, and Yangyang Wang at Shandong University.

\bibliographystyle{unsrtnat}
\bibliography{reference}

\newpage
\appendix
\newtheorem{propose}{\textbf{Proposition}}[section]
\section*{Appendix}
\section{Related Work}
\label{app:related-work}

This section complements related works especially in LLM evaluation for Section \ref{sec:related-work}.

\subsection{Benchmarks for Knowledge Capacity Evaluation}
\par LLMs' knowledge capacity evaluation plays a fundamental yet significant role in LLM evaluation tasks. The knowledge capacity evaluation aims to quantify LLMs' ability to master and utilize professional knowledge and skills to solve real-world problems. Therefore, existing benchmarks usually depend on expert-designed test datasets, especially multiple choice questions, to evaluate LLMs' knowledge capacity.
\par In terms of general knowledge benchmarks, MMLU \citep{hendrycks2021mmlu} proposed to comprehensively evaluate LLMs' professional knowledge on 57 subjects using the classical multiple choice question-based test. Similarly, C-Eval \citep{huang2023ceval} focuses on evaluating LLMs' advanced knowledge and reasoning abilities in Chinese context. AGIEval \citep{zhong2023agieval} collects human-centric datasets from real-world examinations such as SAT, GRE and GMAT to evaluate LLMs' knowledge capacity in human-centric tests. Differently, ARC \citep{clark2018arc} measures LLMs' knowledge capacity from knowledge styles rather than knowledge domains, including but not limited to ``Definition'', ``Basic Facts \& Properties'', ``Algebratic'', etc. In a word, general knowledge benchmarks usually consist of a knowledge taxonomy and a series of corresponding labelled datasets, each of which focusing on evaluating a specific aspect or domain of knowledge capacity for LLMs.
\par In terms of specific and professional knowledge benchmarks, expert-designed datasets are also the top priority. For clinical knowledge, MedMCQA \citep{pall2022medmcqa}, and PubMedQA \citep{jin2019pubmedqa} utilize professional multiple choice questions to assess LLMs' clinical knowledge capacity and disease diagnosis ability. For social science, \citet{ziems2024computationalsocialscience} explored evaluating LLMs' knowledge capacity in computational social science using real-world classification tasks. \citet{nay2023taxlaw} explored LLMs' knowledge capacity in tax law with a case study using automatically generated multiple choice questions. For science and engineering subjects, \citet{arora2023science} proposed JEEBench, a challenging scientific problem solving benchmark consisting of mathematics, physics and chemistry problems. Other representative benchmarks such as GSM-8K \citep{cobbe2021gsm8k} and ScienceQA \citep{lu2022scienceqa}, also utilize expert-designed professional test data to assess LLMs knowledge capacity from various perspectives.

\subsection{Trustworthiness of Knowledge Capacity Evaluation}
\par Large language models are self-regressive text generation models trained on large-scale online-accessible textual data. Therefore, from both the test data acquisition perspective and the response generation perspective, there exist various risks in the trustworthiness of LLMs' knowledge capacity evaluation. These risks highly threaten the correctness and reliability of the knowledge capacity evaluation results.

\par From the test data acquisition perspective, the test data contamination problem has been proved widely exist in evaluation benchmarks \citep{aiyappa2023trust,sainz2023contamination,jacovi2023contamination}. Test data contamination can lead to overvalued knowledge capacity, and should be avoided at all costs. To this end, much effort have been devoted into distinguishing contaminated data or preventing data contamination with new test data. \citet{magar2022contamination} studied the problem for BERT models, and proposed a principled method to analyze the exploitation and memorization of contaminated data. \citet{jacovi2023contamination} suggested using pre-evaluation intervention such as encrypting test data and avoiding uploading textual data to the internet to prevent data contamination. On the other hand, \citet{bai2023llmexaminer} proposed to utilize an examiner LLM to generate test questions and evaluate LLMs' knowledge capacity to avoid data contamination. Recently, \citet{golchin2024time} and \citet{oren2024proving} respectively proposed multi-level and example order-inspired methods for identifying test data contamination. In a word, existing works usually prevent data contamination by avoiding testing on contaminated data or generating brand-new test data. However, how to avoid data contamination while sufficiently utilize valuable knowledge information of existing test data for evaluation is still underexplored.
\par From the response generation perspective, LLMs' biases towards specific prompt features in knowledge capacity evaluation have been widely observed \citep{pezeshkpour2023ordering, zheng2024robust, li2024mcquseful, khatun2024mcq} which can highly influence their performance. \citet{pezeshkpour2023ordering} discovered that LLMs' output sensitivity to specific ordering of options. For instance, some LLMs prefer selecting the first option in the option list, which proved the ordering bias in multiple choice question-based evaluation. \citet{li2024mcquseful} further proposed methods to quantify consistence and confidence of LLMs' accuracy on test datasets. \citet{khatun2024mcq} explored the usefulness of multiple choice question-based dataset for evaluating open-source LLMs, and found that many small-scale open-source LLMs fail to properly understand and select an answer from given choices. Recently, \citet{zheng2024robust} detected the selection bias towards options in multiple choice question answering for LLMs, and proposed PriDe, a label-free and inference time-free debiasing method. However, as an external plug-in for LLMs, this method is inappropriate for application to measuring the knowledge capacity of large language models themselves. In summary, existing research usually emphasize empirically detecting biases in the response generation procedure and proposing debiasing methods. Howver, how to avoid such bias to evaluate LLMs' real knowledge capacity remains a research question.

\section{Details of Methodology}
\subsection{Content-level Perturbation: Knowledge-invariant Paraphrasing}
\par The algorithm of knowledge-invariant paraphrasing using LLM rewriter is presented in Algorithm \ref{alg:knowledgeInvariantParaphrase}. The prompt template for the LLM rewriter is presented in Figure \ref{fig:prompt-template-rewriter}. In implementation, an expected similarity score is provided to the LLM rewriter to control the degree of the change of question text. In experiments, the expected similarity score is fixed to 0.6. An example of knowledge-invariant paraphrasing in MMLU college mathematics is shown in Table \ref{tab:knowledge-invariant-paraphrasing-case}.

\label{app:details-of-methodology}
\begin{algorithm}[htp]
\caption{Knowledge-invariant Paraphrasing Using LLM Rewriter}
\label{alg:knowledgeInvariantParaphrase}
\begin{algorithmic}[1] 
\Statex{\textbf{Input}: $q = (s_1, s_2, \ldots, s_T)$, the question text; $M$, the LLM rewriter}
\Statex{\textbf{Output}: $q' = (s_1', s_2', \ldots, s_T')$, the perturbation output}
\Procedure{knowledgeInvariantParaphrase}{$q, M$} 
    \State Initialize rewriting prompt template $p_{template}(\cdot)$
    \State $q' \gets \emptyset$
    \For{$t \in 1,2,\ldots, T$} 
        \State $p_{current} \gets p_{template}(s_t, (s_1,\ldots, s_{t-1}))$ \Comment{Generate the rewriting prompt}
        \State $r_{current} \gets M.request(p_{current})$ \Comment{Obtain the LLM rewriter's output}
        \State $s_t' \gets sentenceFilter(r_{current})$
        \State $q'\gets q' \oplus s_t'$ \Comment{Append the perturbed sentence to the end of the question}
    \EndFor
    \State \Return $q'$ 
\EndProcedure
\end{algorithmic}
\end{algorithm}

\begin{figure}[htp]
    \centering
\begin{tcolorbox}[colback=white,colframe=black,width=\textwidth,breakable=false]
{\ttfamily
Here is a sentence in a multiple choice question. Please rewrite the sentence given its context and the expected similarity score. Here are necessary requirements:

[Requirements Start]

1. Be consistent with its context.

2. The rewrited sentence should keep the semantic of the original sentence.

3. If the sentence contains blanks/underlines to be filled, these blanks/underlines should be kept after paraphrasing.

4. You can utilize various rewriting skills (e.g., add/replace/delete words, paraphrase) to make it looks different from the original.

[Requirements End]
\newline

[Meaning of Expected Similarity Score Start]

For the expected similarity score (0.0 - 1.0), 1.0 denotes that the rewrited is exactly the same as the original; 0.8 denotes that the the there exist word-level differences between the rewrited and the original; 0.6 denotes that there exist not only word-level, but lots of sentence structure-level differences between the rewrited and the original; 0.4 denotes that you are allowed to entirely paraphrase the sentence by your own; 0.2 denotes that you are allowed to add misleading statements to the current sentence.

[Meaning of Expected Similarity Score End]
\newline

You should only output the rewrited sentence without any extra content.

Expected similarity score: \{similarity\_score\}

Context: \{context\}

Sentence: \{sentence\}

Your output:
}
\end{tcolorbox}
    \caption{Prompt template for the rewriter LLM. The expected similarity score is used to control parphrasing of the rewriter LLM, which is set to 0.6 in all experiments.}
    \label{fig:prompt-template-rewriter}
\end{figure}

\begin{table}[htp]
    \centering
    \caption{An example of knowledge-invariant paraphrasing of a test question. Texts surrounded by angular brackets are invisible in question prompts input to the LLM test-taker.}
    \label{tab:knowledge-invariant-paraphrasing-case}
    \begin{tabularx}{\linewidth}{XX}
    \toprule
       \textbf{Original} & \textbf{Knowledge-invariant Paraphrasing}\\
       \midrule
       <\# \textbf{Context \& Condition}> Let $T: R^2 \to R^2$ be the linear transformation that maps the point (1, 2) to (2, 3) and the point (-1, 2) to (2, -3). \newline<\# \textbf{Goal}> Then $T$ maps the point (2, 1) to & 
       <\# \textbf{Context \& Condition}> Let $T$ be the linear transformation from $R^2$ to $R^2$ such that $T$ maps (1, 2) to (2, 3) and (-1, 2) to (2, -3). \newline<\# \textbf{Goal}> Then, the linear transformation $T$ will map the point (2, 1) to\\
    \bottomrule
    \end{tabularx}
\end{table}

\subsection{Format-level Perturbation: Question Format Refactoring}
\label{app:question-format-refactoring}
\par Examples of question format refactoring are shown in Table \ref{tab:format-level-knowledge-invariant-perturbation-case}.

\begin{table}[htp]
    \centering
    \caption{Examples of format-level knowledge-invariant perturbations. Texts surrounded by angular brackets are invisible in question prompts input to the LLM test-taker.}
    \label{tab:format-level-knowledge-invariant-perturbation-case}
    \begin{tabularx}{\linewidth}{lXX}
    \toprule
        \textbf{Perturbation} & \textbf{Original case} & \textbf{Perturbed case} \\
        \midrule
        OptionPerm & <\# \textbf{Options}>\newline A $x=1$; B $x=2$; C $x=3$; D $x=4$ & <\# \textbf{Options}>\newline A $x=4$; B $x=3$; C $x=2$; D $x=1$\\ \midrule
        OptionForm & <\# \textbf{Options}>\newline A $x=1$; B $x=2$; C $x=3$; D $x=4$ & <\# \textbf{Options}>\newline A) $x=1$; B) $x=2$; C) $x=3$; D) $x=4$\\ \midrule
        OptionCaesar & <\# \textbf{Options}>\newline A $x=1$; B $x=2$; C $x=3$; D $x=4$ & <\# \textbf{Options}>\newline U $x=1$; V $x=2$; W $x=3$; X $x=4$\\ \midrule
        ChangeType & <\# \textbf{Prompt}> Please select correct option(s) given the following question: & <\# \textbf{Prompt}> Please judge whether each of the options is correct given the following question:\\ \midrule
        SwapPos & <\# \textbf{Prompt}> Please select correct option(s) given the following question:\newline <\# \textbf{Question}> The solution of the equation $2x+1=3$ is\newline <\# \textbf{Options}> A $x=1$; B $x=2$; C $x=3$; D $x=4$ & <\# \textbf{Prompt}> Please select correct option(s) given the following question:\newline <\# \textbf{Options}> A $x=1$; B $x=2$; C $x=3$; D $x=4$\newline <\# \textbf{Question}> The solution of the equation $2x+1=3$ is \\
        \bottomrule
    \end{tabularx}
\end{table}

\begin{table}[htp]
    \centering
    \caption{The criteria of scores for knowledge invariance judgement.}
    \label{tab:meaning-of-scores}
    \begin{tabularx}{\linewidth}{lX}
    \toprule
        \textbf{Score} & \textbf{Criteria}\\
        \midrule
        5.0 & The perturbation perfectly satisfies all the standards and is entirely acceptable. \\ \midrule
        4.0 & There are only minor flaws in the perturbed question. As a whole, the perturbed question is acceptable. \\ \midrule
        3.0 & Only some parts of the perturbation is acceptable. As a whole, the perturbed question is less acceptable. \\ \midrule
        2.0 & There are major flaws in the perturbed question that makes it unacceptable. \\ \midrule
        1.0 & There are fatal flaws in the perturbed question that makes it entirely unacceptable. \\
    \bottomrule
    \end{tabularx}
\end{table}

\begin{figure}[htp]
    \centering
\begin{tcolorbox}[colback=white,colframe=black,width=\textwidth,breakable=false]
{\ttfamily
Your task is to grade the knowledge invariance degree of a perturbed multiple choice question against the original question.

You clearly know that if a perturbed question is knowledge-invariant, the question has to satisfy the following requirements:

[Perturbation Requirements Start]

1. Semantic Information Invariance. The perturbed question must have the same semantic information as the original question, which cannot change the name of entities, logic of statements and meaning of equations.

2. Reasoning Invariance. A human test-taker's reasoning process to obtain his/her response in the perturbed question should be consistent with that in the original question.

3. Answer Invariance. The answer of a perturbed question should be semantically equivalent to the answer of the original question.

4. Statement Clarity. The perturbed question should clearly present contexts, conditions and the target of the question without ambiguous statement.

[Perturbation Requirements End]
\newline

The grading score is from 1 to 5. Grading criteria are given in the following:

[Grading Criteria Start]

1.0 - There are fatal flaws in the perturbed question that makes it entirely unacceptable.

2.0 - There are major flaws in the perturbed question that makes it unacceptable.

3.0 - Only some parts of the perturbation is acceptable. As a whole, the perturbed question is less acceptable.

4.0 - There are only minor flaws in the perturbed question. As a whole, the perturbed question is acceptable.

5.0 - The perturbation perfectly satisfies all the requirements and is entirely acceptable.

[Grading Criteria End]
\newline

[Original Question Start]: 

\{original\_question\}

[Original Question End]
\newline

[Perturbed Question Start]:

\{perturbed\_question\}

[Perturbed Question End]
}
\end{tcolorbox}
    \caption{Prompt template for the LLM-based knowledge invariance scoring (Part 1/2).}
    \label{fig:prompt-template-ki-1}
\end{figure}

\begin{figure}[htp]
    \centering
\begin{tcolorbox}[colback=white,colframe=black,width=\textwidth,breakable=false]
{\ttfamily
You should grade the perturbation following these steps:

1. Recall the perturbation requirements and grading criteria, and read the original and the perturbed questions in detail.

2. For each of perturbation requirements, carefully judge its satisfaction degree of the perturbed question.

3. Based on step 1 and step 2, give a total grading score for the perturbed question.

4. Analyze strengths and weakness of the perturbed question from the view of perturbation requirements based on step 1,2,3.

Think carefully for a while, then propose your conclusion. Your output template is given as follows:
\newline

[Template Start]

\{

  "score": <numeric score from 1 to 5>,
  
  "strength": <"xxx", strengths of the perturbation>,
  
  "weakness": <"xxx", weaknesses of the perturbation>
  
\}

[Template End]
\newline

Your conclusion:
}
\end{tcolorbox}
    \caption{Prompt template for the LLM-based knowledge invariance scoring (Part 2/2).}
    \label{fig:prompt-template-ki-2}
\end{figure}

\subsection{Correct Response Consistency}
\label{app:correct-response-consistency}
\par Similar to the confusion matrix, terms in ROP can be written in a response consistency matrix, as shown in Table \ref{tab:performance-transition-matrix}. Here ``$C$''/``$I$'' in the first position denotes ``consistently''/``inconsistently''. ``$C$''/``$W$'' in the second position denotes ``correct''/``wrong''. For instance, $CC$ denotes \textit{consistently correct}.
\begin{table}[htp]
    \centering
    \caption{Response transition matrix.}
    \label{tab:performance-transition-matrix}
    \begin{tabular}{|c|c|c|}
    \hline
       \textbf{Number} & \textbf{Correct after perturbation} $\sigma$ & \textbf{Wrong after perturbation $\sigma$} \\ \hline
        \textbf{Correct before perturbation $\sigma$} & $CC$ & $IC$ \\ \hline
        \textbf{Wrong before perturbation $\sigma$} & $IW$ & $CW$ \\ \hline
    \end{tabular}
\end{table}

\section{Experiment Setup}
\label{app:experiment-setup}
\subsection{Details of Dataset}
\label{app:details-of-dataset}
\par Statistics of datasets are presented in Table \ref{tab:dataset-statistics}.

\begin{table}[htp]
    \centering
    \caption{Dataset Statistics}
    \label{tab:dataset-statistics}
    \setlength{\tabcolsep}{2pt}
    \resizebox{\linewidth}{!}{
    \begin{tabular}{l|rrrr}
    \toprule
        \textbf{Name} & \textbf{College Mathematics} & \textbf{High School World History} & \textbf{Professional Psychology} & \textbf{Professional Medicine} \\ \midrule
        Supercategory & STEM & Humanities & Social Science & Others \\
        \multirow{3}{*}{Concepts} & Differential equations, & Ottoman empire, & Diagnosis, & Diagnosis, \\
        & real analysis, & economic imperialism, & biology and behavior, & pharmacotherapy, \\
        & combinatorics... & World War I... & lifespan development, ... & disease prevention... \\
        \# Questions & 100 & 237 & 612 & 272 \\
        \# Tokens per Q & 46.00$\pm$25.37 & 290.25$\pm$124.13 & 28.02$\pm$19.53 & 144.66$\pm$65.41\\
        \# Tokens per P & 129.52$\pm$29.19 & 392.27$\pm$127.62 & 125.11$\pm$31.63 & 233.58$\pm$66.60 \\
        \bottomrule
    \end{tabular}
    }
\end{table}

\subsection{Evaluation Setting}
\par \textbf{Model Setting.} For close-sourced LLMs (gpt-4-turbo, gpt-3.5-turbo, glm-3-turbo, gemini-1.0-pro), we use their APIs for evaluation. The temperature parameter of each model is set to 0.2 to reduce output uncertainty. For open-sourced LLMs (llama-3-8b-instruct, mistral-7b-instruct-v0.2), we locally deploy their huggingface version to our own server and utilize default settings for evaluation.
\par \textbf{Environment Setting.} All experiments are completed on a Linux server with Intel(R) Xeon(R) Platinum 8269CY CPUs @ 2.50GHz and one NVIDIA A100 GPU (40G). GPUs are used for deploying and fine-tuning open-sourced models. The version of Python is 3.9. The version of the {\ttfamily torch} package is 2.2.0. The version of the {\ttfamily transformers} package is 4.39.3.
\par \textbf{Prompt Setting.} In our experiments, we use prompt templates to generate prompts of questions for evaluation. Examples using prompt templates are shown in Figure \ref{fig:prompt-template-mcq} and \ref{fig:prompt-template-mjq}. After getting model outputs, we transform them to json data for further analyses and utilize regular expressions to enhance the robustness of transformation. Details are available at our code.
\par \textbf{Supervised Fine-tuning Setting.} In Appendix \ref{app:enhance-knowledge-capacity}, we fine-tune llama-3-8b-instruct using \modelname-generated data to explore the utility of \modelname in enhancing the knowledge capacity of LLMs. The fine-tuning process is done using LoRA~\citep{hu2022lora} on single GPU implemented in {\ttfamily LLaMA-Factory}~\citep{zheng2024llamafactory}. The batch size is 1, since the number of training data is 237 (High School World History) + 272 (Professional Medicine) = 479. The learning rate is 5e-5. The number of training epoch is 3.

\begin{figure}[htp]
    \centering
\begin{tcolorbox}[colback=white,colframe=black,width=\textwidth,breakable=false]
{\ttfamily
Please select the correct option(s) from the following options given the question:

Question: Let k be the number of real solutions of the equation $e^x + x - 2 = 0$ in the interval [0, 1], and let n be the number of real solutions that are not in [0, 1]. Which of the following is true?

Options:

A k = 0 and n = 1

B k = 1 and n = 0

C k = n = 1

D k > 1

Your output must strictly follow this format:

\{"answer": <the list of selected options, e.g., ["A", "B", "C", "D"]>\}

Your output:
}
\end{tcolorbox}
    \caption{Prompt template for multiple-choice question answering.}
    \label{fig:prompt-template-mcq}
\end{figure}

\begin{figure}[htp]
    \centering
\begin{tcolorbox}[colback=white,colframe=black,width=\textwidth,breakable=false]
{\ttfamily
Please judge whether each of the options is correct or incorrect given the question:

Question: Let k be the number of real solutions of the equation $e^x + x - 2 = 0$ in the interval [0, 1], and let n be the number of real solutions that are not in [0, 1]. Which of the following is true?

Options:

A k = 0 and n = 1

B k = 1 and n = 0

C k = n = 1

D k > 1

Your output must strictly follow this format:

\{"A": <"True" or "False">, "B": <"True" or "False">, "C": <"True" or "False">, "D": <"True" or "False">\}

Your output:
}
\end{tcolorbox}
    \caption{Prompt template for multiple-judgement question answering (ChangeType).}
    \label{fig:prompt-template-mjq}
\end{figure}

\begin{figure}[t]
    \centering
    \includegraphics[width=0.9\linewidth]{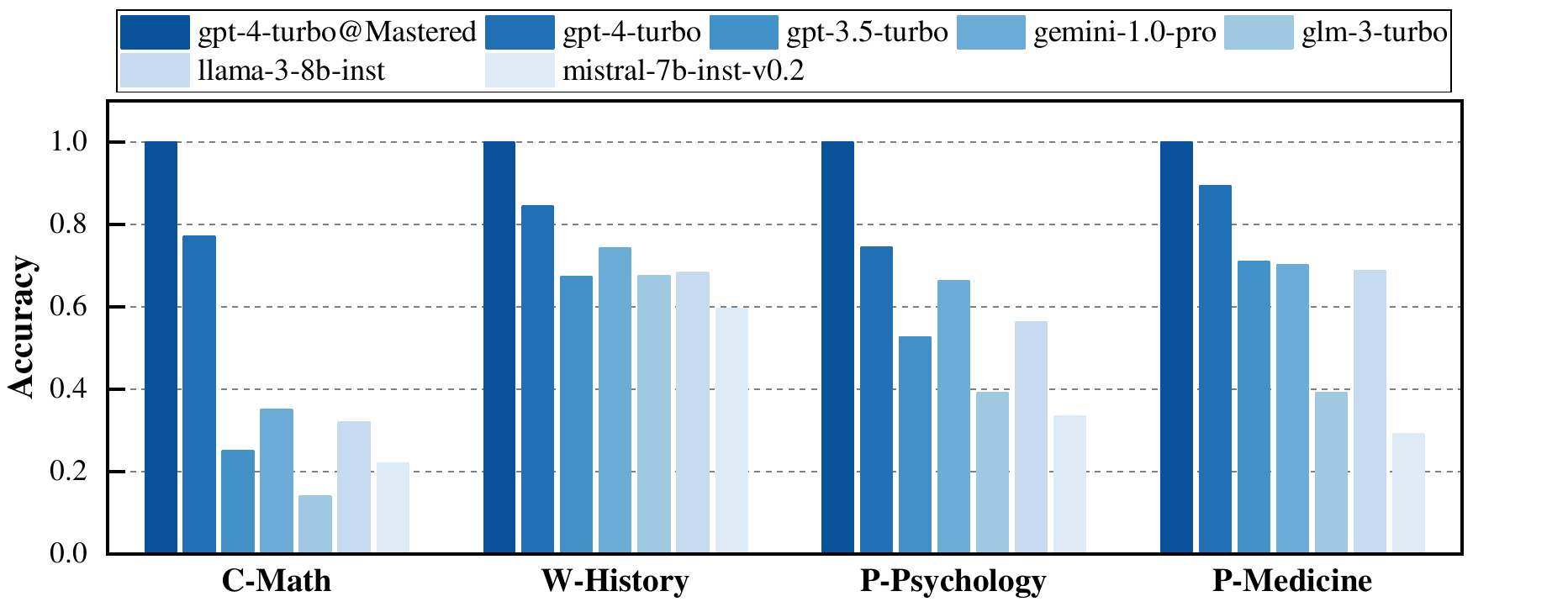}
    \caption{Accuracy score of LLMs in original datasets. The gpt-4-turbo@Mastered denotes the accuracy of gpt-4-turbo on the mastered question set for checking knowledge invariance, which is defined in Section \ref{sec:knowledge-invariance-checking}.}
    \label{fig:accuracy-subject-model}
\end{figure}

\subsection{Experiment Cost}
\par\textbf{Costs in \# Tokens.} Overall, we conduct the following experiment steps that require token costs:
\begin{enumerate}
    \item[\textbf{E0.}] Generating PertEval benchmark.
    \item[\textbf{E1.}] Knowledge-invariance validation - obtain knowledge invariance score.
    \item[\textbf{E2.}] Main experiment - obtain ACC@Consist.
    \item[\textbf{E3.}] Fine-grained perturbation analysis - obtain ROP and PDR.
\end{enumerate}

Exact costs w.r.t. experiments are presented as follows:

\begin{table}[htp]
\caption{Exact token costs w.r.t. experiments.}
\label{tab:cost-token-all}
\centering
\begin{tabular}{lll}
\toprule
\textbf{Index} & \textbf{\# Input tokens} & \textbf{\# Output tokens} \\ \midrule
\textbf{E0} & 3,542,896 & 374,967 \\
\textbf{E1} & 776,353 & 101,675 \\
\textbf{E2} & 3,248,124 & 190,476 \\
\textbf{E3} & 9,343,200 & 336,996 \\ \bottomrule
\end{tabular}
\end{table}

\par\textbf{Costs in USD.} For proprietary LLMs, the cost is from the pricing strategy of LLM APIs (proportional to \# input/output tokens). For open-sourced LLMs, the cost originates from the pricing strategy of servers for LLM deployment (proportional to the running time of servers). We count overall costs in USD per experiment, as shown in Table \ref{tab:cost-usd-all}. 

\begin{table}[htp]
\caption{Costs in USD per experiment.}
\label{tab:cost-usd-all}
\centering
\begin{tabular}{ll}
\toprule
\textbf{Exp Index} & \textbf{Price (USD)} \\ \midrule
\textbf{E0} & 41.79 \\
\textbf{E1} & 10.81 \\
\textbf{E2} & 13.09 \\
\textbf{E3} & 51.47 \\ \midrule
\textbf{Total} & \textbf{117.16} \\ \bottomrule
\end{tabular}
\end{table}

\section{Verifying the Knowledge Invariance of Perturbation Strategies}
\label{app:check-knowledge-invariance}
\par In this part, we utilize the proposed knowledge invariance verification methods to measure the validity of perturbations in terms of knowledge invariance. The prompt template for the LLM-based knowledge invariance scoring is presented in Figure \ref{fig:prompt-template-ki-1} and \ref{fig:prompt-template-ki-2}.
Results of \textbf{LLM-based knowledge invariance scoring} are shown in Figure \ref{fig:llm-based-knowledge-invariance-scoring}. Results of \textbf{overall performance stability testing on mastered questions} are presented in Figure \ref{tab:opst-mastered-questions}. We obtain several conclusions from these experimental results, as given in the following:
\par\textbf{C1. Proposed perturbations are knowledge-invariant in most cases.} From the right part of Figure \ref{fig:llm-based-knowledge-invariance-scoring}, we observe that knowledge invariance scores of our proposed perturbations exceeds 4.5 and exceeds the baseline in most cases. For the content-level KnInvPara, since paraphrasing question confronts inevitable semantic changing, its knowledge invariance score is not as high as that of format-level perturbations. Indeed, we can observe from Table \ref{tab:opst-mastered-questions} that PDRs of KnInvPara are always closed to zero, which demonstrates its potential knowledge invariance.
\par\textbf{C2. Traditional string-based text distance metric is insufficient for measuring knowledge invariance.} In the left part of Figure \ref{fig:llm-based-knowledge-invariance-scoring}, we show the average logarithm of Leveinshtein distance (edit distance) for each perturbation. Comparing the Levenshtein distances of PromptAttack, global text ordering-level format perturbation (ChangeType, SwapPos) and KnInvPara, we find that although PromptAttack has the lowest Leveinshtein distance in most datasets, its knowledge invariance scores are also relatively low. On the other hand, PromptAttack have the highest Leveinshtein distance in W-History, but also has the lowest knowledge invariance score in the four datasets. Therefore, As a result, there lacks an obvious correlation between Leveinshtein distance and knowledge invariance score. Essentially, this result indicates that knowledge relevant features of texts could be be disentangled from string-level features to some extent. How to efficiently and effectively measure knowledge invariance or quantify knowledge relevant features of texts remains a research problem.

\par\textbf{C3. LLMs are vulnerable to global text order-level knowledge-invariant perturbations even in mastered questions.} We observe from Table \ref{tab:opst-mastered-questions} that SwapPos can decrease the performance of gpt-4-turbo even in mastered questions although we have demonstrated the knowledge invariance of the perturbation in Figure \ref{fig:llm-based-knowledge-invariance-scoring}, Comparing PDRs on different datasets and statistics of different datasets, we observe that it is highly correlated with the number of tokens per question. 

\subsection{Probing LLMs' Consistent Knowledge Capacity}
\label{app:prob-consist-kc}
\begin{propose} \label{prop:pure-guess}
In multiple choice questions, given $k$ options and one single correct answer for each question, the expected value of ACC@Consist is $1/k^2$ for pure guessing.
\end{propose}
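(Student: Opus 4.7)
The plan is to combine linearity of expectation with the independence of independent random guesses on the original and perturbed versions of each question. Under pure guessing, the test-taker selects each of the $k$ options uniformly at random, independently across distinct prompts. Since the original $q_x$ and the perturbed $q_{\sigma^*(x)}$ constitute two distinct prompts, the events ``the guess on $q_x$ matches $y_x$'' and ``the guess on $q_{\sigma^*(x)}$ matches $y_{\sigma^*(x)}$'' are independent, each with probability $1/k$.

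First, I would unfold the definition of ACC@Consist and apply linearity of expectation to the sum of indicators, producing
\begin{equation}
\mathbb{E}[\text{ACC@Consist}(M,D)] = \frac{1}{|D|}\sum_{x\in D}\Pr\!\left[M(q_{\sigma^*(x)})=y_{\sigma^*(x)} \,\wedge\, M(q_x)=y_x\right].
\end{equation}
Next, I would invoke the independence of the two independent guesses to factor the joint probability, giving $\Pr[\cdot \wedge \cdot] = \Pr[M(q_x)=y_x]\cdot \Pr[M(q_{\sigma^*(x)})=y_{\sigma^*(x)}]$. Each factor equals $1/k$ because a uniformly random choice over $k$ options hits the unique correct answer with probability $1/k$, regardless of which particular option is labelled correct. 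This yields $1/k^2$ per term, and averaging over $|D|$ identical terms gives the claimed $1/k^2$.

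The only subtle step is justifying independence: the proof must rely on the modelling assumption that a ``pure guesser'' draws its answers on $q_x$ and $q_{\sigma^*(x)}$ independently, which is the standard interpretation of random guessing and is consistent with the knowledge-invariance setup in the paper (the two prompts are treated as distinct inputs to $M$). No deeper obstacle arises beyond stating this assumption explicitly; once it is in place, the calculation above completes the proof.
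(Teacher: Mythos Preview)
Your proposal is correct and matches the paper's own argument: both compute $\Pr[\text{correct on original} \wedge \text{correct on perturbed}] = (1/k)(1/k) = 1/k^2$ via independence of the two guesses, then average over the dataset. The only cosmetic difference is that the paper packages the sum of indicators as a binomial count $X \sim B(n,1/k^2)$ and takes $E[X/n]$, whereas you apply linearity of expectation directly to the indicator average; these are the same computation.
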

\begin{proof}
Let $n$ be the number of multiple choice questions. To calculate ACC@Consist for pure guessing, we need to randomly select an option respectively for the original version and the perturbed version of each question. Since each selection procedure is mutually independent, the probability of selecting correct options for both the original and the perturbed versions is $1/k \times 1/k = 1/k^2$. Let random variable $Z$ denote the number of such cases, then $Z$ follows the binominal distribution $B(n, 1/k^2)$, and $\text{ACC@Consist} = Z/n$. Then the expected value of ACC@Consist is calculated by:
\begin{equation}
    E[\text{ACC@Consist}] = E[Z/n] = \frac{1}{n}E[Z] = 1/k^2
\end{equation}
Then the proof is completed.
\end{proof}

\subsection{Response Pattern Analysis for LLMs in \modelname}
\label{app:incorrect-response-analysis}
\par In this part, we analyze response pattern for LLMs in the original and perturbed data, which can provide us a deep insight into the influence of knowledge-invariant perturbations on LLMs' question answering procedure. The definition of response patterns are listed in the following:
\begin{itemize}[leftmargin = *]
    \item \textbf{Single Correct Selection}: The LLM selects and only selects the correct option. The ratio of correct choices is equvalent to micro accuracy.
    \item \textbf{Single Wrong Selection}: The LLM selects only a single option, and the option is wrong. This means that the LLM fails to recognize the correct option in the option list.
    \item \textbf{Invalid Selection}: The LLM selects none of options. This could be caused by the LLM's wrong reasoning process or invalid output format.
    \item \textbf{Multiple Selection Including the Correct One}: The LLM selects not only the correct option, but also extra incorrect options. This means that the LLM fails to filter out incorrect options from the option list.
    \item \textbf{Multiple Selection w/o the Correct One}: The LLM selects multiple options which do not cover the correct option.
\end{itemize}
\par Given the response pattern taxonomy, we counted the ratio of response patterns in original and perturbed data for all selected LLMs, as shown in Figure \ref{fig:incorrect-response-analysis-gpt-4}, \ref{fig:incorrect-response-analysis-gpt-3.5}, \ref{fig:incorrect-response-analysis-glm-3}, \ref{fig:incorrect-response-analysis-gemini-1.0}, \ref{fig:incorrect-response-analysis-llama-3-8b-inst} and \ref{fig:incorrect-response-analysis-mistral-7b-instruct-v0.2}.

\begin{figure}[htp]
    \centering
    \includegraphics[width=0.8\linewidth]{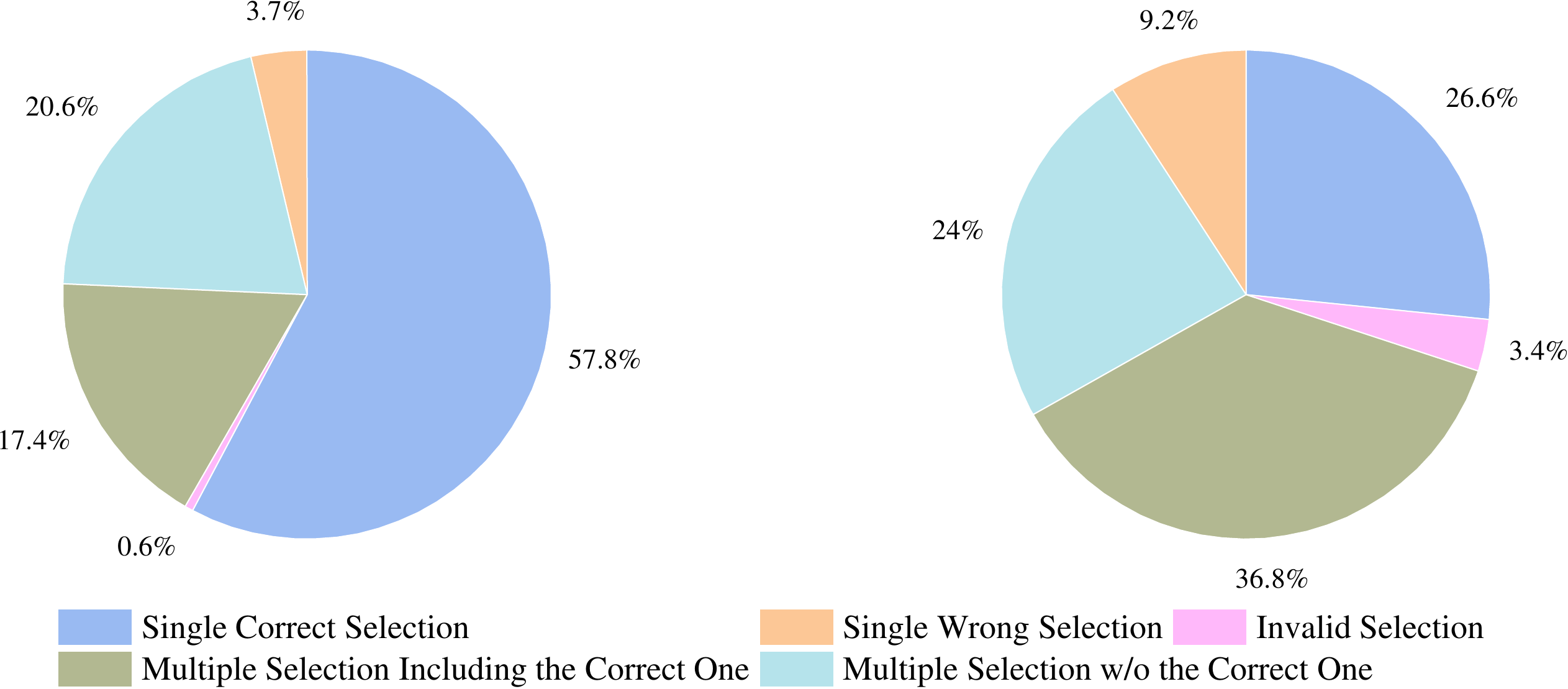}
    \caption{Response patterns of \textbf{gpt-3.5-turbo} in all datasets. Left: original data. Right: perturbed data.}
    \label{fig:incorrect-response-analysis-gpt-3.5}
\end{figure}

\begin{figure}[htp]
    \centering
    \includegraphics[width=0.8\linewidth]{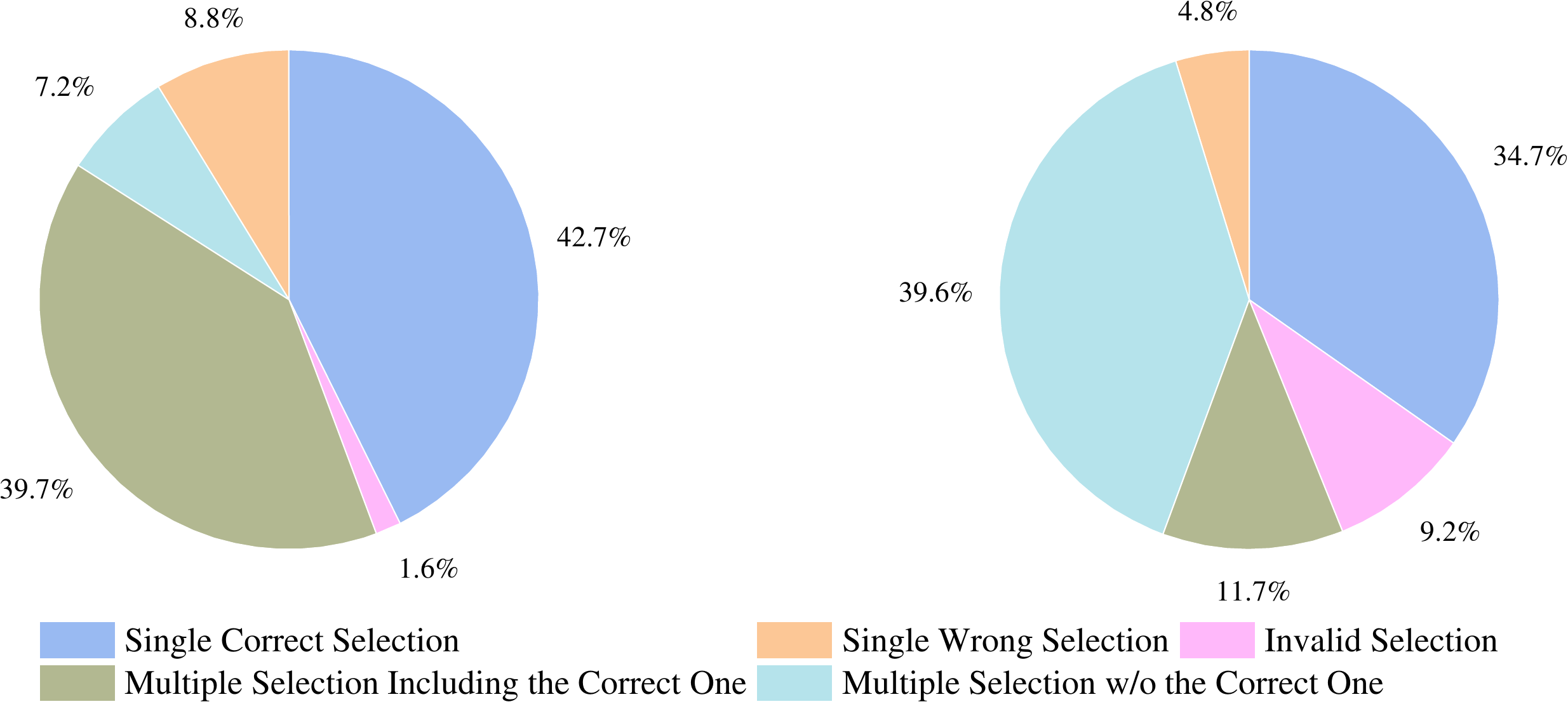}
    \caption{Response patterns of \textbf{glm-3-turbo} in all datasets. Left: original data. Right: perturbed data.}
    \label{fig:incorrect-response-analysis-glm-3}
\end{figure}

\begin{figure}[htp]
    \centering
    \includegraphics[width=0.8\linewidth]{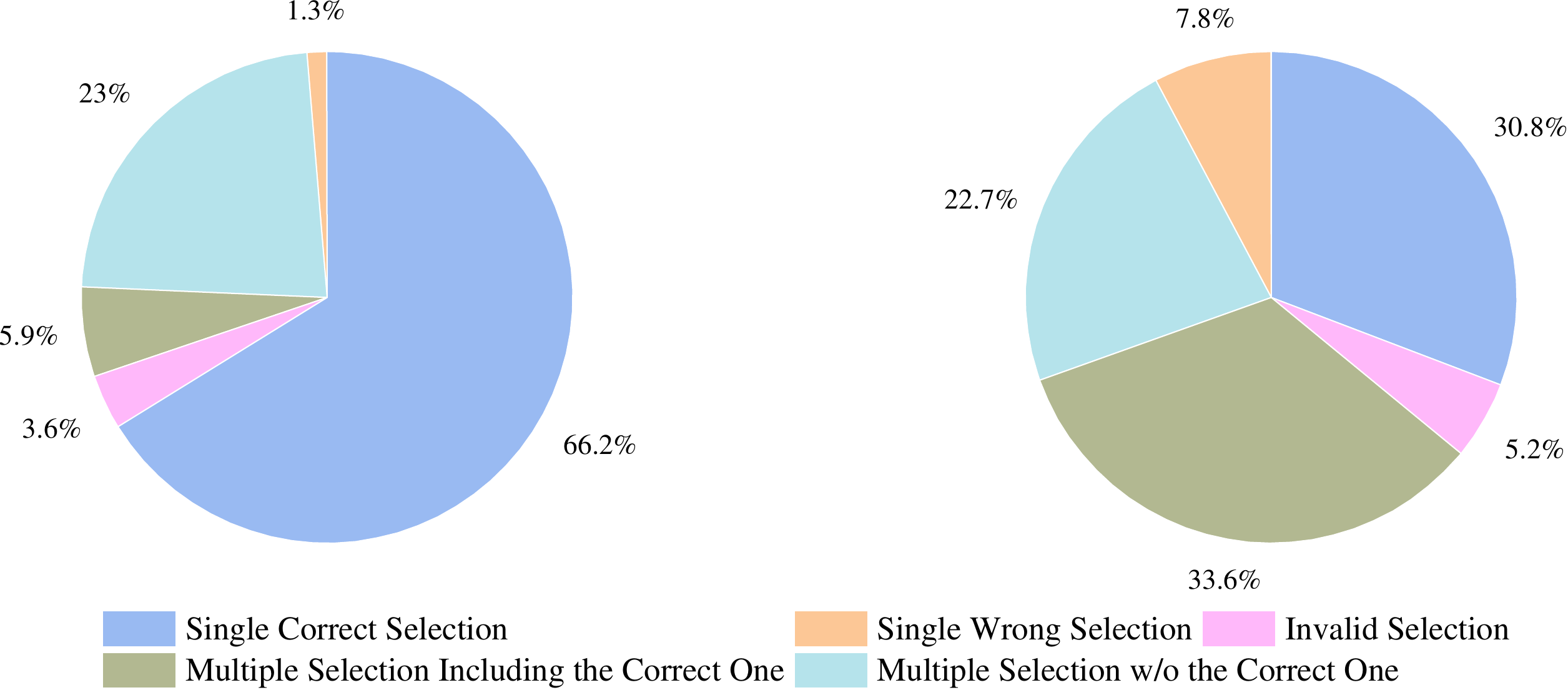}
    \caption{Response patterns of \textbf{gemini-1.0-pro} in all datasets. Left: original data. Right: perturbed data.}
    \label{fig:incorrect-response-analysis-gemini-1.0}
\end{figure}

\begin{figure}[htp]
    \centering
    \includegraphics[width=0.8\linewidth]{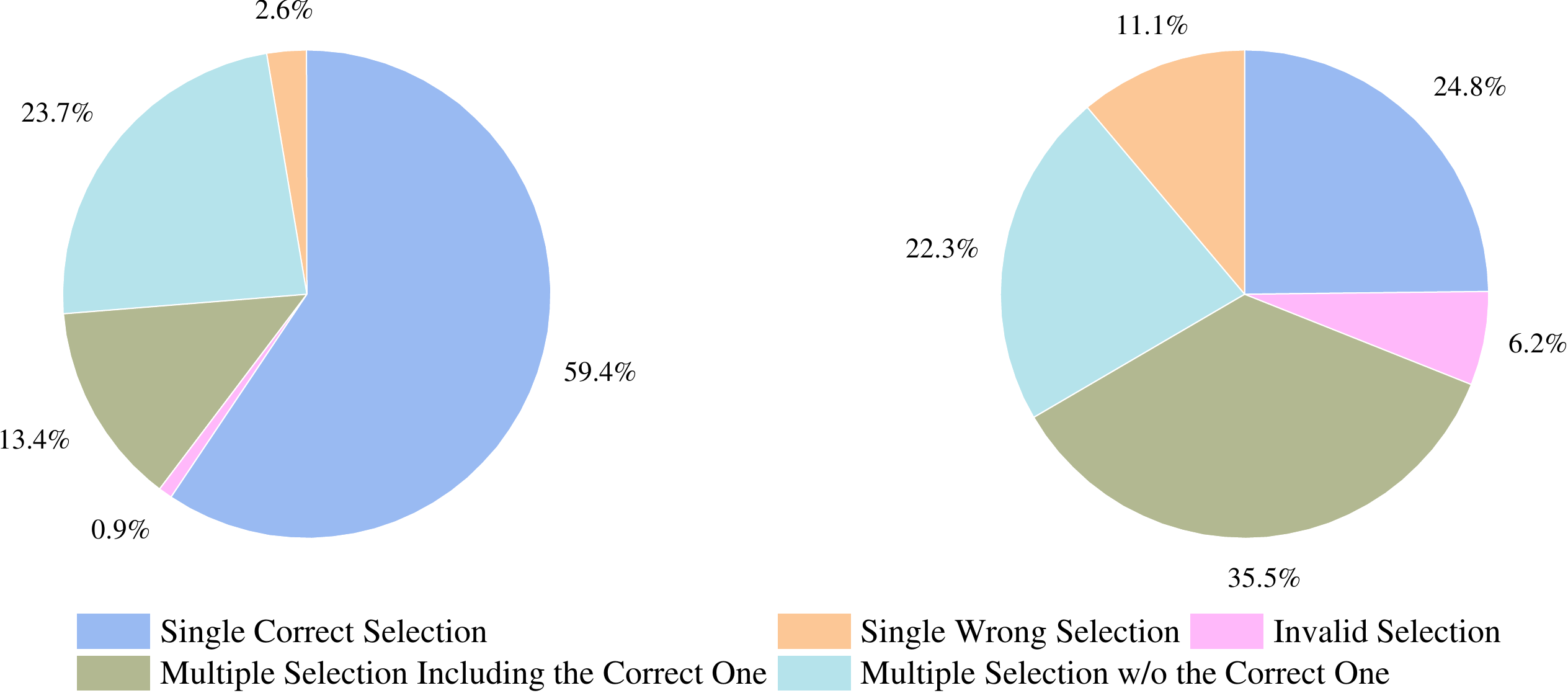}
    \caption{Response patterns of \textbf{llama-3-8b-instruct} in all datasets. Left: original data. Right: perturbed data.}
    \label{fig:incorrect-response-analysis-llama-3-8b-inst}
\end{figure}

\begin{figure}[htp]
    \centering
    \includegraphics[width=0.8\linewidth]{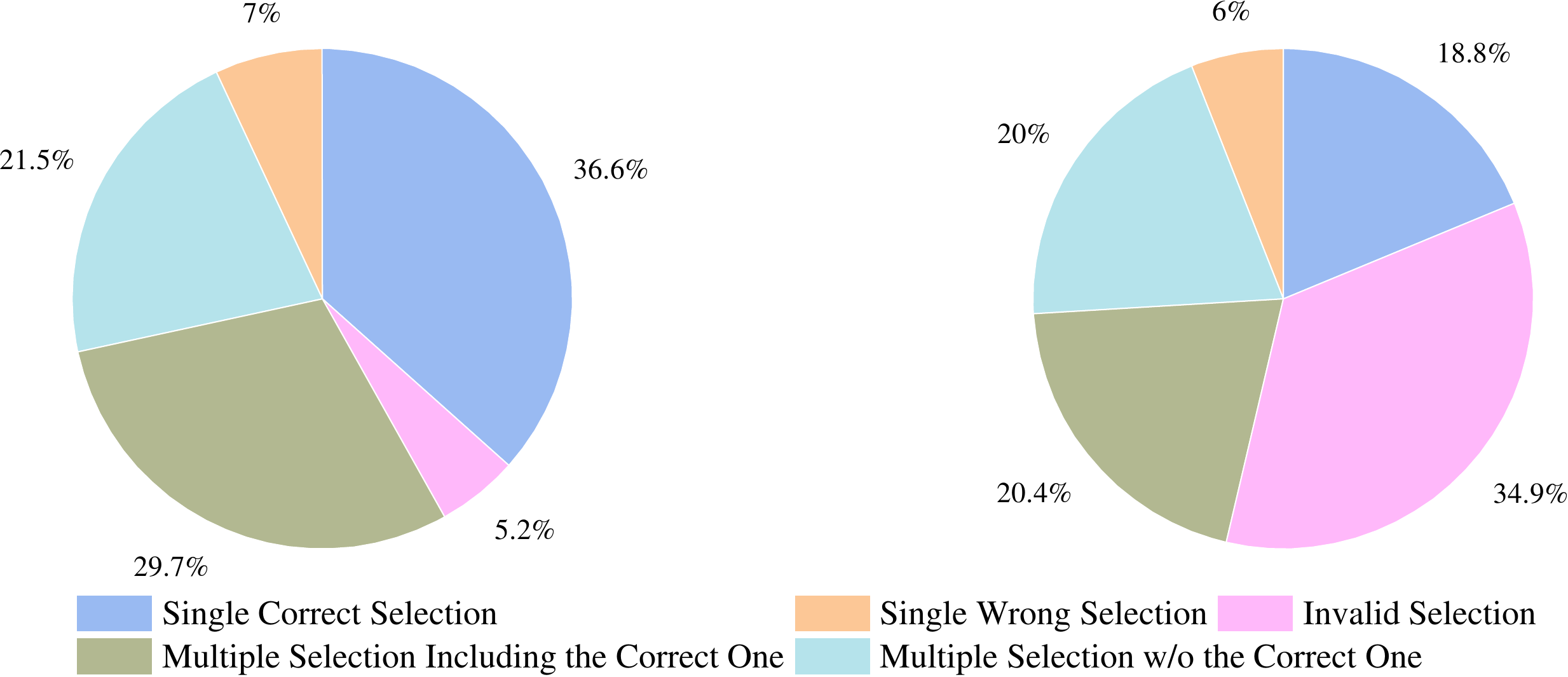}
    \caption{Response patterns of \textbf{mistral-7b-instruct-v0.2} in all datasets. Left: original data. Right: perturbed data.}
    \label{fig:incorrect-response-analysis-mistral-7b-instruct-v0.2}
\end{figure}

\subsection{How to Enhance LLMs' Knowledge Capacity Using \modelname?}
\label{app:enhance-knowledge-capacity}
\par The goal of \modelname is not only to evaluate large language models' consistent knowledge capacity, but to help LLMs enhance their knowledge capacity in real-world applications. To this end, we explore utilizing perturbed data generated by \modelname to fine-tune the open-sourced LLM, LLaMA-8B-Instruct, as presented in Table \ref{tab:fine-tuning}. We obtain several insights from the experimental results.
\par For format-level perturbations, we recommend fine-tune LLMs on a small set of perturbed data to enhance their consistent knowledge capacity in all knowledge domains. This conclusion originates from the observed \textbf{stimulation phenomenon} in fine-tuning. That is, only fine-tuning the model with a subset of perturbed data can significantly improve its overall performance stability in all perturbed data. This result indicates the ability of LLMs to ``acquire'' format-level perturbations and perform consistently on perturbed data in all kinds of knowledge domains.
\par For content-level perturbations, we recommend training knowledge expert models using both the original and the perturbed data for specific knowledge domains. This conclusion orignates from \textbf{the lack of transferability of content-level fine-tuning} for datasets. This observation is reasonable because the knowledge and statement styles of different domain vary a lot. Considering the numerous number of knowledge domains and the issue of cost and efficiency, we suppose training knowledge expert models for specific knowledge domains is a good choice for striking a balance between knowledge capacity and training efficiency.

\begin{table}[htp]
\centering
\caption{PDR of LLaMA-8B-Instruct before and after fine-tuning on perturbed datasets. Underlined datasets are used for fine-tuning. CT denotes ChangeType. KP denotes KnInvPara. SP denotes SwapPos. For example, F(CT) means fine-tuning LLaMA-8B-Instruct on world history and professional medicine datasets perturbed by ChangeType.}
\label{tab:fine-tuning}
\setlength{\tabcolsep}{2pt}
\resizebox{\linewidth}{!}{
\begin{tabular}{cl|rrrr|r|r}
\toprule
\multicolumn{1}{l}{\textbf{Strategy}} & \textbf{fine-tune} & \textbf{C-Math} & {\ul \textbf{W-History}} & \textbf{P-Psychology} & {\ul \textbf{P-Medicine}} & \textbf{AVG$_{macro}$} & \textbf{AVG$_{micro}$} \\
\midrule
\multirow{3}{*}{ChangeType} & None & \textbf{-0.2300} & \textbf{-0.3418} & \textbf{-0.2467} & \textbf{-0.3493} & \textbf{-0.2920} & \textbf{-0.1998} \\
 & F(CT) & -0.0700 & +0.0759 & +0.0196 & +0.0257 & +0.0128 & -0.0868 \\
 & F(CT+KP) & -0.0500 & +0.0422 & +0.0082 & +0.0074 & +0.0020 & +0.0019 \\
 \midrule
\multirow{3}{*}{SwapPos} & None & \textbf{-0.0700} & \textbf{-0.2110} & \textbf{-0.1944} & \textbf{-0.2500} & \textbf{-0.1814} & \textbf{-0.2867} \\
 & F(SP) & +0.0100 & -0.0675 & -0.1095 & -0.0882 & -0.0638 & +0.0246 \\
 & F(SP+KP) & -0.0300 & -0.1350 & -0.1029 & -0.1176 & -0.0964 & -0.1065 \\
\midrule
\multirow{4}{*}{KnInvPara} & None & +0.0200 & \textbf{-0.0802} & -0.0163 & \textbf{-0.0478} & -0.0311 & -0.0328 \\
 & F(KP) & \textbf{-0.0400} & -0.0253 & -0.0212 & 0.0000 & -0.0216 & -0.0188 \\
 & F(CT+KP) & \textbf{-0.0400} & -0.0549 & \textbf{-0.0343} & -0.0368 & \textbf{-0.0415} & \textbf{-0.0393} \\
 & F(SP+KP) & -0.0300 & -0.0675 & -0.0212 & -0.0184 & -0.0343 & -0.0303 \\
\bottomrule
\end{tabular}
}
\end{table}

\newpage
\section*{Checklist}


\begin{enumerate}

\item For all authors...
\begin{enumerate}
  \item Do the main claims made in the abstract and introduction accurately reflect the paper's contributions and scope?
    \answerYes{}
  \item Did you describe the limitations of your work?
    \answerYes{See Section \ref{sec:discussion}.} 
  \item Did you discuss any potential negative societal impacts of your work?
    \answerNo{Our work mainly focuses on techniques for LLM evaluation. We proposed a novel evaluation toolkit for enhancing existing evaluation benchmark rather proprosing than a new dataset. Therefore, there is no risk for this work to have negative societal impacts caused by online data collection, such as the invasion of users' privacy.}
  \item Have you read the ethics review guidelines and ensured that your paper conforms to them?
    \answerYes{}
\end{enumerate}

\item If you are including theoretical results...
\begin{enumerate}
  \item Did you state the full set of assumptions of all theoretical results?
    \answerYes{}
	\item Did you include complete proofs of all theoretical results?
    \answerYes{Due to page limitation, proofs are available at Appendix \ref{app:prob-consist-kc}.}
\end{enumerate}

\item If you ran experiments (e.g. for benchmarks)...
\begin{enumerate}
  \item Did you include the code, data, and instructions needed to reproduce the main experimental results (either in the supplemental material or as a URL)?
    \answerYes{The URL to our code and data is available at abstract.}
  \item Did you specify all the training details (e.g., data splits, hyperparameters, how they were chosen)?
    \answerYes{Due to page limitation, details are available at Appendix \ref{app:experiment-setup}.}
	\item Did you report error bars (e.g., with respect to the random seed after running experiments multiple times)?
    \answerNo{It is too costly and timely expensive to conduct experiments for multiple times given the huge number of settings (assuming there are 6 models, 6 perturbations, 4 datasets and 10 running times, there are at least 6*6*4*10 = 1,440 groups of experiments). Substitutionally, in experiments, we set the temperature parameter to all LLMs as a relative small value ($\tau = 0.2$) to reduce the uncertainty of LLMs' outputs. We also open-sourced our code to ensure reproducibility.}
	\item Did you include the total amount of compute and the type of resources used (e.g., type of GPUs, internal cluster, or cloud provider)?
    \answerYes{A description of computational resources is available at Appendix \ref{app:experiment-setup}.}
\end{enumerate}

\item If you are using existing assets (e.g., code, data, models) or curating/releasing new assets...
\begin{enumerate}
  \item If your work uses existing assets, did you cite the creators?
    \answerYes{We used LLMs (both APIs of close-sourced LLMs and weights of open-sourced LLMs) and open-sourced datasets in our work.}
  \item Did you mention the license of the assets?
    \answerNo{}
  \item Did you include any new assets either in the supplemental material or as a URL?
    \answerYes{The URL of the code and data of our work is available at abstract.}
  \item Did you discuss whether and how consent was obtained from people whose data you're using/curating?
    \answerNA{We did not collect private data from people. Instead, all APIs, model weights and evaluation data used in this work, are published online and accessible to everyone.}
  \item Did you discuss whether the data you are using/curating contains personally identifiable information or offensive content?
    \answerNA{Data we used in our work is about knowledge capacity evaluation, and does not contain personally identifiable information.}
\end{enumerate}

\item If you used crowdsourcing or conducted research with human subjects...
\begin{enumerate}
  \item Did you include the full text of instructions given to participants and screenshots, if applicable?
    \answerNA{}
  \item Did you describe any potential participant risks, with links to Institutional Review Board (IRB) approvals, if applicable?
    \answerNA{}
  \item Did you include the estimated hourly wage paid to participants and the total amount spent on participant compensation?
    \answerNA{}
\end{enumerate}

\end{enumerate}


\end{document}